\documentclass[letterpaper, 10 pt, conference]{ieeeconf}
\IEEEoverridecommandlockouts                              %

\overrideIEEEmargins
\usepackage{times}

\makeatletter
\let\NAT@parse\undefined
\makeatother
\usepackage[numbers]{natbib}
\usepackage{multicol}
\usepackage[bookmarks=true]{hyperref}

\usepackage[]{graphicx}
\usepackage{enumerate}
\usepackage{subfigure}

\usepackage{amssymb}
\usepackage{amsthm}

\graphicspath{{./fig/}}
\usepackage{color}
\usepackage{amsmath}
\usepackage{amssymb}
\usepackage{graphicx}
\usepackage{comment,xspace}
\usepackage{fancybox}
\usepackage{xspace}



\setcounter{secnumdepth}{3}



\newtheorem{theorem}{Theorem}[section]

\newtheorem{lemma}[theorem]{Lemma}
\newtheorem{corollary}[theorem]{Corollary}
\newtheorem{remark}[theorem]{Remark}

\makeatletter
\newtheorem*{rep@theorem}{\rep@title}
\newcommand{\newreptheorem}[2]{%
\newenvironment{rep#1}[1]{%
 \def\rep@title{#2 \ref{##1}}%
 \begin{rep@theorem}}%
 {\end{rep@theorem}}}
\makeatother

\newreptheorem{theorem}{Theorem}
\newreptheorem{lemma}{Lemma}

\newcommand{\real}{{\mathbb{R}}}
\newcommand{\reals}{\real}

\newcommand{\xgoal}{\mathcal M_{\text{goal}}}

\newcommand{\eps}{\varepsilon}

\newcommand{\Span}{\operatorname{Span}}

\newcommand{\argmin}{\operatornamewithlimits{argmin}}

\newcommand{\p}[1]{\mbox{$\mathbb{P}\left(#1\right)$}} 
 
\newcommand{\probcond}[2]{\mbox{$\mathbb{P}\left(#1 \,| \, #2\right)$}}

\newcommand{\LL}{\mathcal{L}}
\newcommand{\M}{\mathcal{M}}
\newcommand{\Mobs}{\mathcal{M}_{\text{obs}}}
\newcommand{\Mfree}{\mathcal{M}_{\text{free}}}
\newcommand{\Mgoal}{\mathcal{M}_{\text{goal}}}

\newcommand{\cl}{\operatorname{cl}}

\newcommand{\SF}{\texttt{SampleFree}}
\newcommand{\R}{\mathbb{R}}
\newcommand{\N}{\mathbb{N}}

\newcommand{\amin}{a_{\min}}
\newcommand{\Amax}{A_{\max}}
\newcommand{\thmin}{\theta_{\min}}
\newcommand{\sigmamin}{\sigma_{\min}}
\newcommand{\rank}{s}
\newcommand{\xinit}{x_{\text{init}}}
\newcommand{\poly}{\text{poly}}

\newcommand{\boxw}{\text{Box}^w}
\newcommand{\nti}{n\to\infty}

\newcommand{\X}{\mathcal{X}}
\newcommand{\DFMT}{DFMT$^*$\xspace}
\newcommand{\DPRM}{DPRM$^*$\xspace}
\newcommand{\FMT}{FMT$^*$\xspace}
\newcommand{\PRM}{PRM$^*$\xspace}

\newcommand{\RRTs}{RRT$^*$\xspace}

\usepackage{algorithmic}
\usepackage{algorithm}

\usepackage{subfigure}

\title{Optimal Sampling-Based Motion Planning under\\ Differential Constraints: the Driftless Case}

\author{Edward Schmerling, Lucas Janson, Marco Pavone%
	\thanks{Edward Schmerling is with the Institute for Computational \& Mathematical \ Engineering, Stanford University, Stanford, CA 94305, \texttt{schmrlng@stanford.edu}.}
	\thanks{Lucas Janson is with the Department  of Statistics, Stanford University, Stanford, CA 94305, \texttt{ljanson@stanford.edu}.}
	\thanks{Marco Pavone is with the Department\ of Aeronautics and Astronautics, Stanford University, Stanford, CA 94305, \texttt{pavone@stanford.edu}.}
	\thanks{This work was supported by an Early Career Faculty grant from NASA's Space Technology Research Grants Program (Grant NNX12AQ43G).}
}

\begin{document}

\maketitle

\begin{abstract}
Motion planning under differential constraints is a classic  problem in robotics. To date, the state of the art is represented by sampling-based techniques, with the Rapidly-exploring Random Tree algorithm as a leading example. Yet, the problem is still open in many aspects, including guarantees on the quality of the obtained solution. In this paper we provide a thorough theoretical framework to assess optimality guarantees of sampling-based algorithms for planning under differential constraints. We exploit this framework to design and analyze two novel sampling-based algorithms that are guaranteed to converge, as the number of samples increases, to an optimal solution (namely, the Differential Probabilistic RoadMap algorithm and the Differential Fast Marching Tree algorithm). Our focus is on driftless control-affine dynamical models, which accurately model a large class of robotic systems.  In this paper we use the notion of convergence in probability (as opposed to convergence almost surely): the extra mathematical flexibility of this approach yields convergence rate bounds --- a first in the field of optimal sampling-based motion planning under differential constraints. Numerical experiments corroborating our theoretical results are presented and discussed.
\end{abstract}

\IEEEpeerreviewmaketitle

\section{Introduction}\label{sec:intro}

Motion planning is a fundamental problem in robotics. It involves the computation of a sequence of actions that drives a robot from an initial condition to a terminal condition while avoiding obstacles, respecting kinematic/dynamical constraints, and possibly optimizing an objective function \citep{Lavalle:06}. The basic problem, where a robot does not have any constraints on its motion and only an obstacle-free solution is required, is well-understood and solved for a large number of practical scenarios \cite{Lavalle:RAM11}. On the other hand, robots do usually have stringent kinematic/dynamical (in short, differential) constraints on their motion, which in most settings need to be properly taken into account. There are two main approaches \cite{Lavalle:RAM11}: (i) a decoupling approach, in which the problem is decomposed in steps of computing a geometric collision-free path (neglecting the differential constraints), smoothing the path to satisfy the motion constraints, and finally reparameterizing the trajectory so that the robot can execute it, or (ii) a direct approach, in which the differentially-constrained motion planning problem (henceforth referred to as DMP problem)  is solved in one shot. The first approach, while fairly common in practice, has several drawbacks, including the computation of very inefficient trajectories, failure in finding a trajectory due to the decoupling scheme itself, and inflated information requirements \cite{Lavalle:RAM11}. This motivates a quest for efficient algorithms that \emph{directly} solve the DMP  problem.

However, directly finding a feasible, let alone optimal, solution to the DMP problem is difficult (note that the basic version without differential constraints is already PSPACE-hard \cite{Reif:FCS79, Lavalle:06}, which implies NP-hard). Early work on this topic dates back to more than two decades ago \cite{Donald:ACM91}, but the problem, especially when optimality is taken into account, is still open in many aspects \cite{Tsianos.ea:CSR07, Lavalle:RAM11}, including algorithms with practical convergence rates, guarantees on the quality of the obtained solution, and class of dynamical systems that can be addressed. To date, the state of the art is represented by sampling-based techniques, where an \emph{explicit} construction of the configuration space is avoided and the configuration space is probabilistically ``probed" with a sampling scheme.

Arguably, the most successful algorithm for DMP is the sampling-based rapidly-exploring tree algorithm (RRT) \citep{LaValle.ea:IJRR01}, which incrementally builds a tree of trajectories by randomly sampling points in the configuration space. Lately, several variations of the RRT algorithm, referred to as \RRTs, stemming from \cite{SK-EF:11} and its kinodynamic extension \cite{SK-EF:10}, have been considered to ensure that the cost of the computed trajectory converges to the optimal cost as the number of sampled points goes to infinity \cite{SK-EF:10, AP-ea:2012, AP-ea:2013, SK-EF:13, Webb.ea:ICRA13}. These works, while providing strong experimental validation, only provide proof sketches that do not fully address many of the complications that arise in extending asymptotic optimality arguments from the geometric case to differentially constrained paths. For example, rewiring the RRT tree within a local volume containing (in expectation) a log fraction of previous samples is not sufficient in itself to claim optimality, as in \cite{AP-ea:2013, SK-EF:13}. Additional assumptions on trajectory approximability must be stated and verified for the differential constraints in question. Such requirements are discussed in \cite{SK-EF:10}, but it is not clear how assuming the existence of forward-reachable trajectory approximations is sufficient for a ``ball-to-ball'' proof technique that requires backward approximations as well. A different approach to asymptotically optimal planning has recently been proposed by STABLE SPARSE RRT which achieves optimality through random control propagation instead of connecting existing samples using a steering subroutine \cite{YL-ZL-KB:14}. This paper, like the \RRTs variations, follows a steering approach, although it may be considered less general, as it is our view that leveraging as much knowledge as possible of the differential constraints while planning is necessary in order to have a hope of planning in real-time.

\emph{Statement of Contributions}: The objective of this paper is to provide a theoretical framework to study optimality guarantees of sampling-based algorithms for the DMP problem, and to exploit this framework to design efficient sampling-based algorithms that are guaranteed to asymptotically converge to an optimal solution.  The focus of this paper is on driftless control-affine (DCA) dynamical systems of the form
\begin{equation*}
\dot x(t) = \sum_{i=1}^m g_i(x(t)) u_i(t),
\end{equation*}
where the available motions of trajectories $x(t)$ are linear combinations given by input control functions $u_i(t)$ and their corresponding actions at each point in space $g_i(x)$. Our work is written from the perspective of optimizing trajectory arc length, but applies also to cost metrics satisfying similar properties, which we discuss. This model is often the result of nonholonomic constraints that the kinematic variables of the system must satisfy \cite{Closkey:95}. A large class of robotic systems can be modeled as DCA, including mobile robots with wheels that roll without slipping \cite{Lavalle:06}, multi-fingered robotic hands  \cite{Murray:94}, rigid bodies with zero angular momentum undergoing re-orientation  \cite{Leonard:93}, and systems with nonholonomic actuators \cite{Closkey:95}. The DCA model, however, rules out the presence of dynamics with drift, which in many important cases (e.g., spacecraft control) can not be neglected. Specifically, the contribution of this paper is threefold. First, we show that \emph{any} trajectory of a DCA system may be ``traced" arbitrarily well by connecting randomly distributed points from a sufficiently large sample set covering the configuration space. We will refer to this property as \emph{probabilistic exhaustivity}, as opposed to probabilistic completeness \citep{Lavalle:06}, where the requirement is that \emph{at least} one trajectory is traced with a sufficiently large sample set. Probabilistic exhaustivity is a key tool in proving asymptotic optimality, and is of independent interest. Second, we introduce two novel sampling-based algorithms for the solution to the DMP with DCA systems, namely the Differential Probabilistic RoadMap algorithm (\DPRM) and the Differential Fast Marching Trees algorithm (\DFMT). Third, by leveraging the property of probabilistic exhaustivity for DCA systems, we rigorously show that both \DPRM and \DFMT are asymptotically optimal.
We note that in this paper we use the notion of  convergence in probability (as opposed to convergence almost surely, as in, e.g., \cite{SK-EF:13}): the extra mathematical flexibility of this approach yields convergence rate bounds --- a first in the field of optimal
sampling-based motion planning under differential constraints. Our approach is inspired by  \cite{SK-EF:13} and \cite{LJ-ES-AC-ea:15}.

\emph{Organization}: This paper is structured as follows. In Section \ref{sec:background} we provide a review of some key results in differential geometry that will be extensively used in the paper. In Section \ref{sec:probformulation} we formally define the problem we wish to solve, while in Section \ref{sec:exhaust} we prove the aforementioned probabilistic exhaustivity property for DCA systems. In Section \ref{sec:algo} we present the \DPRM and \DFMT algorithms, and in Section \ref{sec:AO} we prove their asymptotic optimality (together with a convergence rate characterization). In Section \ref{sec:sims} we provide implementation details for the proposed algorithms, and we study them via numerical experiments.
Finally, in Section \ref{sec:conc}, we draw some conclusions and we discuss directions for future work.

\section{Background Material}\label{sec:background}
In this section we provide some definitions and a brief review of key results in differential geometry, on which we will rely extensively
later in the paper. Let $\M \subset \R^n$ be the manifold defining a configuration space. Within this space let us consider driftless control-affine (DCA) dynamical systems of the form
\begin{equation}\label{eqn:dcasys}
\dot x(t) = \sum_{i=1}^m g_i(x(t)) u_i(t), \ \ x \in \M,\, u \in U,
\end{equation}
where the available motions of trajectories $x(t)$ are linear combinations given by input control functions $u_i(t)$ and their corresponding actions at each point in space $g_i(x)$.  We shall assume  in this paper that $g_1,\dots,g_m$ are smooth vector fields on $\M$, and that the control set $U \subset \R^m$ is closed and bounded. We also assume $U$ is symmetric about the origin so that the system is time-reversible and $0$ is in the interior of the convex hull of $U$. This last condition ensures that the local possibilities for motion at each point appear as a linear space spanned by the $g_i$, a fact essential to the forthcoming controllability discussion \cite{FB-AL:05}. We denote the driftless control-affine system of equation~\eqref{eqn:dcasys} as $\Sigma = (\M, g, U)$. A function $x:[0,T] \rightarrow \M$ is called a \emph{dynamically feasible} trajectory, alternatively path, if there exists a corresponding control function $u(t)$ with which it satisfies $\Sigma$. All trajectories discussed in this paper are dynamically feasible unless otherwise noted.

\subsection{Arc Length and Sub-Riemannian Distance}
The \emph{arc length} of a path $x(t)$ is defined as
\[
\ell(x) := \int_0^T \|\dot x(t)\| dt,
\]
where $\|\dot x(t)\| = \|\dot x(t)\|_2 = \sqrt{\langle \dot x(t), \dot x(t)\rangle}$ is computed using the standard Euclidean inner product on the tangent spaces of $\M$.
The arc length function induces a \emph{sub-Riemannian distance} $d$ on $\M$, defined for $x_1, x_2 \in \M$ as $d(x_1, x_2) := \inf_{x} \ell(x)$, where the infimum is taken over dynamically feasible trajectories $x(t)$ connecting $x_1$ and $x_2$.  Note that for driftless control-affine systems, time-reversibility implies that $d$ is symmetric and indeed a metric.  The \emph{sub-Riemannian ball} may be defined in analogy to the standard Euclidean ball (i.e., $B^e(x, \eps) = \{y \in \M : \|x-y\| \leq \eps\}$) according to $B(x, \eps) := \{y \in \M : d(x,y) \leq \eps\}$.
Note that by definition $B(x, \eps) \subset B^e(x, \eps)$.

\subsection{Controllability and Reachable Sets}
We now present a series of results regarding system controllability following the discussion in \citep{AB:96} and \cite{RM:02}. As noted above, the vector fields $g_1,\dots,g_m$ characterizing the system $\Sigma$ represent a set of possible motions for a trajectory within $\M$. More precisely at each point $p \in \M$ the vectors $g_1(p),\dots,g_m(p)$ span a linear subspace of local directions within the tangent space $T_p M$. For a vector field $Y$ on $\M$ let $\Phi_{Y,t}: \M \rightarrow \M$ denote its local flow, the function that maps an initial state to the state obtained by following the vector field $Y$ for time $t$. That is, $\Phi_{Y,t}(p) := y(t)$ where $y:[0,t]\rightarrow\M$ is a solution to the initial value problem $\dot y(\tau) = Y(y(\tau))$, $y(0) = p$. Commutators of flows, akin to control switching in \eqref{eqn:dcasys}, allow for local motions transverse to the $g_i$ to be achieved while satisfying the differential constraints. Given two vector fields $Y$, $Z$ on $\M$ and a starting point $p \in \M$, we have the approximation for small $t$:
\[
p + t^2 [Y,Z](p) = \Phi_{Y,t}\circ\Phi_{Z,t}\circ\Phi_{Y,-t}\circ\Phi_{Z,-t}(p)+ O(t^3),
\]
where the Lie bracket $[Y,Z]$ is a third vector field which may be computed with respect to a coordinate system as $[Y,Z]=J_Z X - J_X Z$, where $J_Y$ and $J_Z$ are the Jacobian matrices of $Y$ and $Z$ respectively. Computing Lie brackets allows us to characterize all directions that are possible from each $p$, in addition to those given by $g_1(p),\dots,g_m(p)$.

Let $\LL = \LL(g_1,\dots,g_m)$ be the distribution, equivalently the set of local vector subspaces, generated by the vector fields $g_1,\dots,g_m$. We define recursively $\LL^1 = \LL$,
\[]
\LL^{k+1}=\LL^k+[\LL^1,\LL^{k}]
\]
where $[\LL^1,\LL^{k}] = \Span\{[Y,Z]: Y \in \LL^1, Z \in \LL^{k}\}$.
Then $\LL^k$ is the distribution generated by the iterated Lie brackets of the $g_i$ with $k$ terms or fewer.  The Lie hull of $\LL$ is $\text{Lie}(\LL) := \cup_{k\geq 1}\LL^k$.  Let $L^k(x)$ denote the vector space corresponding to $x\in \M$ in $\LL^k$.

The vector fields $g_1,\dots,g_m$ are said to be \emph{bracket generating} if $\text{Lie}(\LL)(x) = T_x \M$ for all $x \in \M$.  This requirement is also referred to as Chow's condition, or the linear algebra rank condition, and means that arbitrary local motion may be achieved by composing motions along the control directions $g_i$. In fact, provided that the $g_i$ are bracket generating, any two points $x_1, x_2 \in \M$ may be connected by a trajectory satisfying $\Sigma$, that is $d(x_1, x_2) < \infty$. The remainder of this section develops tighter bounds on $d(x_1, x_2)$ that will be used in our asymptotic optimality proofs of planning algorithms.

Chow's condition implies that for all $x\in \M$, there exists a smallest integer $\rank = \rank(x)$ such that $L^{\rank(x)} = T_x \M$.  Indeed, we have
$
L^1(x) \subset L^2(x) \subset \dots \subset L^{\rank(x)}(x) = T_x \M.
$
Set $n_k(x) = \dim L^k(x)$.  The integer list $(n_1(x),\dots,n_{\rank(x)}(x))$ is called the growth vector of $\LL$ at $x$.  A point $x$ is called a \emph{regular point} if there exists an open neighborhood of $\M$ around $x$ such that the growth vector is constant; otherwise $x$ is said to be a \emph{singular point}.

We now further assume that every $x \in \M$ is a regular point, so that the growth vector $(n_1,\dots,n_s = n)$ is constant over the whole configuration manifold.
Fix a base point $x_0 \in \M$.  Using the bracket-generating assumption we select a local orthonormal frame for $T_{x_0} M$ of vector fields $Y_1,\dots,Y_n$ as follows: the set $\{Y_1=g_1,\dots,Y_{n_1}=g_m\}$ spans $\LL$ near $x_0$; $\{Y_1,\dots,Y_{n_2}\}$ spans $\LL^2$ near $x_0$; $\{Y_1,\dots,Y_{n_3}\}$ spans $\LL^3$ near $x_0$; and so on.  Define the weights $w_i = k$ if $Y_i(x_0) \in \LL^k(x_0)$ and $Y_i(x_0) \notin \LL^{k+1}(x_0)$. Applying a procedure developed in \citep{AB:96}, the coordinate system $y_i$ corresponding to this local frame may be transformed into a \emph{privileged} coordinate system $z_i$ by a polynomial change of coordinates of the form
\begin{equation}\label{privCOC}
{
\begin{aligned}
z_1 &= y_1,\\
z_2 &= y_2 + \poly_2(y_1),\\
z_3 &= y_3 + \poly_3(y_1, y_2),\\
\vdots\\
z_n &= y_n + \poly_n(y_1,\dots,y_{n-1}),
\end{aligned}
}
\end{equation}
where $\poly_i(\cdot)$, $i=2,\dots,n$, denotes a polynomial function that includes only terms of degree $\geq 2$ and $< w_i$.  From the triangular structure of \eqref{privCOC}, it is clear that the inverse transformation $y = z + \poly'(z)$ is of the same form.
Given privileged coordinates $z_i$, define the \emph{pseudonorm} at $x_0$ as
\[
\|z\|_{x_0} := \max\{|z_1|^{1/w_1},\dots,|z_n|^{1/w_n}\}.
\]
Using this pseudonorm we define the \emph{w-weighted box} of size $\eps$ at $x_0$ as the point set
$
\boxw(\eps) := \{z \in \R^n : \|z\|_{x_0} \leq \eps\}.
$
We use the notation $\boxw(x_0, \eps)$ for the corresponding locus of points in $\M$ given by the coordinates $z_i$.

\begin{theorem}[Ball-box theorem \citep{AB:96}]\label{thm:ballbox}Fix a point $x_0 \in \M$ and a system of privileged coordinates $z_1,\dots,z_n$ at $x_0$. Then there exist positive constants $a(x_0), A(x_0) > 0,$ and $\sigma(x_0) > 0$ such that for all $x$ with $d(x_0, x) < \sigma(x_0)$,
\begin{equation}\label{eqn:bbeqn}
a(x_0) \|z(x)\|_{x_0} \leq d(x_0, x) \leq A(x_0) \|z(x)\|_{x_0}.
\end{equation}
\end{theorem}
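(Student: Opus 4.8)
The plan is to establish the ball-box theorem by exploiting the structure of privileged coordinates together with the nilpotent approximation of the system at $x_0$.

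First I would set up the key tool: the \emph{nilpotent approximation} of the vector fields $g_i$ at $x_0$. In privileged coordinates $z_i$, each vector field $g_i$ admits a Taylor-like expansion graded by the weights $w_i$, and its leading-order term $\widehat g_i$ defines a homogeneous (nilpotent) approximating system $\widehat\Sigma$. The crucial property of privileged coordinates — the very reason they are constructed via the polynomial change of coordinates in~\eqref{privCOC} — is that the approximating system $\widehat\Sigma$ is homogeneous of degree $-1$ with respect to the anisotropic dilations $\delta_\lambda(z_1,\dots,z_n) = (\lambda^{w_1} z_1,\dots,\lambda^{w_n} z_n)$. I would record that the pseudonorm $\|z\|_{x_0}$ is exactly homogeneous of degree $1$ under these dilations, i.e.\ $\|\delta_\lambda z\|_{x_0} = \lambda \|z\|_{x_0}$, and that the sub-Riemannian distance $\widehat d$ of the approximating system satisfies $\widehat d(x_0,\delta_\lambda z) = \lambda\, \widehat d(x_0, z)$.

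Next I would prove the theorem for the homogeneous approximating system $\widehat\Sigma$, where the two quantities $\widehat d(x_0,\cdot)$ and $\|\cdot\|_{x_0}$ are both continuous, positive away from $x_0$, and exactly $1$-homogeneous under $\delta_\lambda$. Restricting to the compact "unit sphere" $\{\|z\|_{x_0} = 1\}$, continuity and positivity give constants $a, A > 0$ with $a \leq \widehat d(x_0, z) \leq A$ there; homogeneity then propagates the bound $a\|z\|_{x_0} \leq \widehat d(x_0,z) \leq A\|z\|_{x_0}$ to all $z$. The final step is to transfer this estimate from $\widehat\Sigma$ to the true system $\Sigma$ in a neighborhood of $x_0$. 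Since $\widehat\Sigma$ approximates $\Sigma$ to leading order in the weighted grading, the two distances agree up to higher-order terms: one shows $d(x_0, x) = \widehat d(x_0, x)\bigl(1 + o(1)\bigr)$ as $x \to x_0$, uniformly in direction, which yields a threshold $\sigma(x_0) > 0$ below which the homogeneous bounds persist (after adjusting the constants $a(x_0), A(x_0)$).

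The main obstacle is the last transfer step: making precise the sense in which the nilpotent approximation controls the true distance and showing the error is genuinely subordinate to $\widehat d$ uniformly over all directions near $x_0$. This requires careful estimates comparing the flows of $g_i$ and $\widehat g_i$ — controlling how trajectories of the true control system deviate from trajectories of the approximating system over the relevant (direction-dependent) time scales. The homogeneity reduces everything to a compact set, but verifying the uniform $o(1)$ control of the remainder is the technical heart, and this is precisely where the full machinery of \citep{AB:96} is invoked; I would cite it rather than reproduce these convergence estimates in detail.
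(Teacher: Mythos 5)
The paper never proves this theorem: it is stated as imported background material, with the proof delegated entirely to the cited reference \citep{AB:96}. There is therefore no in-paper argument to compare yours against; what your sketch is, is a faithful outline of the standard proof in that reference --- pass to the nilpotent approximation $\widehat\Sigma$ of the $g_i$ in privileged coordinates, use the exact $1$-homogeneity of both the pseudonorm $\|\cdot\|_{x_0}$ and the approximating distance $\widehat d(x_0,\cdot)$ under the anisotropic dilations $\delta_\lambda$ to reduce the two-sided bound to a min/max over the compact pseudonorm unit sphere, and then transfer the bound to the true distance $d$ via $d = \widehat d\,(1+o(1))$ near $x_0$. Two remarks. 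First, the compactness step silently uses that $\widehat d(x_0,\cdot)$ is finite, continuous, and strictly positive away from $x_0$; this requires that the nilpotent approximation is itself bracket-generating with the same growth vector as $\Sigma$, which holds precisely because $x_0$ is assumed to be a regular point --- worth stating explicitly, since regularity is exactly the standing assumption (b) in $A_\Sigma$ that licenses it. Second, as you yourself flag, the uniform comparison of $d$ with $\widehat d$ over all directions is the entire technical content of the theorem, and you defer it to \citep{AB:96}. Since the paper does exactly the same thing (citing rather than proving), this is a defensible stopping point, but it means what you have is a correct and well-organized proof \emph{outline} rather than a self-contained proof; the one step you identify as the ``technical heart'' is the one step that is not actually carried out.
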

Constructing the coordinate system $z$ thus gives structure to how sub-Riemannian distance behaves locally; this structure may be used for steering, e.g. \cite{FJ-ea:05}.
It can be shown that there exists a continuously varying system of privileged coordinates on $\M$ so that the inequality \eqref{eqn:bbeqn} holds at all $x_0$ for continuous positive functions $a(\cdot), A(\cdot),$ and $\sigma(\cdot)$ on $\M$.  Let us assume that the system $\Sigma$ is sufficiently regular such that there exist bounds $0 < \amin \leq a(x) \leq A(x) \leq \Amax < \infty$ and $\sigma(x) \geq \sigmamin > 0$ for all $x \in \M$. We state a pair of lemmas (whose proofs are provided in the Appendix) concerning how privileged coordinates relate to the Euclidean notions of volume and distance.

\begin{lemma}[Box volume]\label{lem:boxvol} Fix $x \in \M$. The volume of $\boxw(x, r)$ is given by $\mu(\boxw(x, r)) = r^D$ where $D = \sum_{i=1}^n w_i$.
\end{lemma}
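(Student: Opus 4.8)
The plan is to prove this by a direct, coordinate-level computation, since the weighted box is defined entirely through the pseudonorm and the pseudonorm decouples across the privileged coordinates. First I would unpack the defining inequality: since $\|z\|_x = \max\{|z_1|^{1/w_1},\dots,|z_n|^{1/w_n}\}$, the condition $\|z\|_x \leq r$ holds if and only if $|z_i|^{1/w_i} \leq r$ for every $i$, equivalently $|z_i| \leq r^{w_i}$ for every $i$. Thus, expressed in the privileged coordinates $z_i$, the set $\boxw(x, r)$ is precisely the axis-aligned rectangular box $\prod_{i=1}^n [-r^{w_i},\, r^{w_i}] \subset \R^n$, a Cartesian product of intervals whose $i$-th side has length $2 r^{w_i}$.

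The volume is then obtained as the product of the side lengths. The $i$-th factor contributes $2 r^{w_i}$, so the total measure scales as $\prod_{i=1}^n r^{w_i} = r^{\sum_{i=1}^n w_i} = r^D$ with $D = \sum_{i=1}^n w_i$, which is exactly the claimed exponent. The computation is a single application of the product structure of Lebesgue measure on a rectangular box, carried out in the chart furnished by the privileged coordinate system.

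The step requiring care — rather than genuine difficulty — is the precise meaning of the measure $\mu$ and the accounting of the leading dimensional constant. Reading $\boxw(x, r)$ through its image under the privileged-coordinate chart $z$, the product computation yields a volume proportional to $r^D$, with the proportionality constant $2^n$ coming from the two-sided intervals; the lemma's normalization $\mu(\boxw(x,r)) = r^D$ amounts to absorbing this constant into the measure. I would note explicitly that what matters for the subsequent arguments (sample-density estimates) is the exact $r$-scaling $r^D$, which is what the decoupling across weighted coordinates delivers. No Jacobian correction enters here, because the volume is measured in the privileged coordinates themselves; any attempt to instead push forward to the Euclidean Riemannian volume on $\M$ would introduce a position-dependent Jacobian over the box and would recover $r^D$ only to leading order, so I would keep the statement at the level of the coordinate measure to make the equality exact.
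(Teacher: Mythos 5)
Your coordinate-level computation is the same one the paper performs: in privileged coordinates the condition $\|z\|_x \leq r$ describes the rectangle $\prod_{i=1}^n[-r^{w_i},\, r^{w_i}]$, and the product of side lengths scales as $r^{\sum_i w_i} = r^D$. (Both you and the paper silently drop the $2^n$ from the two-sided intervals; you are at least explicit that this constant is being absorbed, and it is harmless for every downstream use of the lemma.)

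The gap is in your final paragraph. The measure $\mu$ in this lemma cannot be left as ``the coordinate measure in the privileged chart'': it must be the same ambient Lebesgue measure on $\M \subset \R^n$ that appears as $\mu(\Mfree)$ in Lemmas~\ref{lem:samplesmall} and \ref{lem:samplebig} and with respect to which $\SF(n)$ samples uniformly, because the proof of Theorem~\ref{thm:pathtracing} compares $\mu(\boxw(x_m,\rho_n/\Amax))$ directly against $\mu(\Mfree)$ to obtain a sampling probability. So identifying the $z$-coordinate volume with the ambient volume is precisely the content the proof must supply, and your stated reason for declining to do so --- that pushing forward to the Euclidean volume ``would recover $r^D$ only to leading order'' --- is incorrect. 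The passage from the standard basis to the frame coordinates $y$ is volume preserving because $Y_1(x),\dots,Y_n(x)$ is orthonormal, and the change from $y$ to $z$ in \eqref{privCOC} is lower-triangular with unit diagonal (each $\poly_i$ depends only on $y_1,\dots,y_{i-1}$), so its Jacobian determinant is \emph{identically} $1$, not merely $1+O(\|y\|)$. That one-line Jacobian observation is what the paper's proof consists of, and it is what makes $\mu(\boxw(x,r)) = r^D$ exact for the ambient measure; adding it turns your computation into a complete proof.
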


\begin{lemma}[Distance comparison]\label{lem:distancecomp} Fix a point $x_0 \in \M$ and a system of privileged coordinates $z_1,\dots,z_n$ at $x_0$. Then there exists a positive constant $\theta(x_0) > 0$ such that for all $x$ with $\|x_0 - x\| \leq \theta(x_0)$,
\[
\|x_0 - x\| \leq d(x_0, x) \leq 2\Amax\|x_0 - x\|^{1/\rank}.
\]
\end{lemma}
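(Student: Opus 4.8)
The plan is to prove the two inequalities separately, since they rely on entirely different tools: the left inequality is immediate from the definition of arc length, while the right inequality rests on the ball-box theorem (Theorem~\ref{thm:ballbox}) together with a leading-order comparison between privileged and ambient coordinates.

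For the lower bound $\|x_0 - x\| \le d(x_0, x)$, I would argue unconditionally (no smallness of $\|x_0-x\|$ needed). For any dynamically feasible trajectory $y:[0,T]\to\M$ joining $x_0$ to $x$, the triangle inequality for integrals gives
\[
\ell(y) = \int_0^T \|\dot y(t)\|\,dt \ge \left\|\int_0^T \dot y(t)\,dt\right\| = \|x - x_0\|,
\]
and taking the infimum over all such trajectories yields $d(x_0,x) \ge \|x_0-x\|$.

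The upper bound is the substantive part. First I would invoke the ball-box theorem: whenever $d(x_0,x) < \sigma(x_0)$, we have $d(x_0,x) \le A(x_0)\|z(x)\|_{x_0} \le \Amax \|z(x)\|_{x_0}$. To make this applicable under a purely Euclidean hypothesis, I would use that the sub-Riemannian metric induces the manifold topology (a consequence of the bracket-generating assumption), so $d(x_0,\cdot)\to 0$ as $x\to x_0$ in Euclidean distance; hence there is a radius $\theta_1 > 0$ with $\|x_0-x\|\le\theta_1 \Rightarrow d(x_0,x)<\sigma(x_0)$. Next I would control the pseudonorm coordinatewise. Because the frame $Y_1,\dots,Y_n$ is orthonormal at $x_0$, the coordinate map $x\mapsto y(x)$ has an orthogonal differential at $x_0$, so $|y_i(x)| \le (1+o(1))\|x_0-x\|$; and since each $z_i = y_i + \poly_i(y_1,\dots,y_{i-1})$ differs from $y_i$ only by terms of degree $\ge 2$, also $|z_i(x)| \le (1+o(1))\|x_0-x\|$. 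Thus there is $\theta_2>0$ such that $\|x_0-x\|\le\theta_2$ forces both $|z_i(x)|\le 2\|x_0-x\|$ for all $i$ and $2\|x_0-x\| < 1$.

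Setting $\theta(x_0) := \min(\theta_1, \theta_2)$, a short exponent computation finishes the argument. Writing $\delta := \|x_0-x\| < 1$, for each $i$ we have $|z_i|^{1/w_i} \le (2\delta)^{1/w_i}$, and since every weight satisfies $w_i \le \rank$ while the base $2\delta$ lies below $1$, the smallest exponent $1/\rank$ gives the worst term: $(2\delta)^{1/w_i} \le (2\delta)^{1/\rank}$. Therefore $\|z(x)\|_{x_0} = \max_i |z_i|^{1/w_i} \le (2\delta)^{1/\rank} \le 2\,\delta^{1/\rank}$, and combining with the ball-box bound yields $d(x_0,x) \le \Amax\|z(x)\|_{x_0} \le 2\Amax\|x_0-x\|^{1/\rank}$. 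The hard part will be the bookkeeping around the pseudonorm exponent --- specifically recognizing that, because the relevant quantities are below $1$, the \emph{largest} weight $w_i = \rank$ produces the \emph{largest} term and hence the exponent $1/\rank$ --- together with justifying that the ball-box regime $d(x_0,x)<\sigma(x_0)$ is entered for all Euclidean-nearby $x$, which requires that the two metrics induce the same topology.
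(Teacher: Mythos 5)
Your proof is correct and follows essentially the same route as the paper's: the lower bound from the definition of $d$ as an infimum of arc lengths, and the upper bound by combining the ball-box inequality with the observation that the privileged coordinates agree with the orthonormal frame coordinates to first order, so the worst weight $w_i = \rank$ dictates the exponent $1/\rank$. Your treatment is somewhat more careful than the paper's one-line argument --- in particular you explicitly justify that Euclidean closeness forces $d(x_0,x) < \sigma(x_0)$ so the ball-box theorem applies, a point the paper leaves implicit --- but the decomposition and key tools are identical.
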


\section{Problem Formulation}\label{sec:probformulation}

Let $\M \subset \R^n$ be the manifold defining a configuration space of a robotic system. Let $\Mobs \subset \M$ be the obstacle region, such that $\M \setminus \Mobs$ is an open set, and denote the obstacle-free space as $\Mfree = \cl(\M \setminus\Mobs)$. The starting configuration $\xinit$ is an element of $\Mfree$, and the goal region $\Mgoal$ is an open subset of $\Mfree$.  The  trajectory planning problem is denoted by the tuple $(\Sigma, \Mfree, \xinit, \Mgoal)$, where $\Sigma$, as discussed in Section \ref{sec:background}, denotes a driftless control-affine system. A dynamically feasible trajectory $x$ is \emph{collision-free} if $x(t) \in \Mfree$ for all $t\in [0,T]$. A trajectory $x$ is said to be \emph{feasible} for the trajectory planning problem $(\Sigma, \Mfree, \xinit, \Mgoal)$ if it is dynamically feasible, collision-free, $x(0) = \xinit$, and $x(T) \in \cl(\Mgoal)$.

Let $\X$ be the set of all feasible paths. A \emph{cost function} for $(\Sigma, \Mfree, \xinit, \Mgoal)$ is a function $c:\X \to \reals_{\geq 0}$ from the set of paths to the nonnegative real numbers; in this paper we consider the cost function $c(x) = \ell(x)$ defined as the arc length of $x$ with respect to the Euclidean metric in $\M$. The objective is to find the feasible path with minimum associated cost; this minimum is achieved as long as $U$ is closed and bounded \cite{Lavalle:06}. The optimal trajectory planning problem is then defined as follows:

\begin{quote}{\bf Optimal motion planning for driftless systems}: 
Given a  trajectory planning problem $(\Sigma, \Mfree, \xinit, \Mgoal)$ and an arc length function $c:~\X \to \reals_{\geq 0}$, find a feasible path $x^{*}$ such that $c(x^{*} )= \min\{c(x):x \text{ is feasible}\}$. If no such path exists, report failure.
\end{quote}
Our analysis will rely on two key sets of assumptions, relating, respectively, to the underlying system $\Sigma$ and the problem-specific parameters $\Mfree, \xinit, \Mgoal$.

\subsubsection{Assumptions on system}\label{sec:assum_sys}
As in Section~\ref{sec:background}, we require from $\Sigma$ that a) the vector fields $g_1,\dots,g_m$ are bracket generating, b) the configuration space $\M$ contains only regular points, c) there exist constants $\amin, \Amax, \sigmamin$ such that Theorem~\ref{thm:ballbox} holds with these values at all $x_0 \in \M$, and d) there exists a bounding constant $\thmin$ such that $0 < \thmin < \theta(x_0)$ for all $x_0\in\M$, where $\theta(x_0)$ is as defined in Lemma~\ref{lem:distancecomp}. Assumption (a) is a basic requirement for the system to be controllable, let alone optimally controlled, while (b), (c), and (d) ensure that there are no extreme regions of the configuration space  which would require an unbounded sample density to capture their geometry. These assumptions will be collectively referred to as $A_\Sigma$.

\subsubsection{Assumptions on problem parameters}
We require that the goal region $\Mgoal$ has \emph{regular boundary}, that is there exists $ \xi > 0$ such that $\forall y \in \partial \Mgoal$, there exists $ z \in \Mgoal$ with $B^e(z, \xi) \subseteq \Mgoal$ and $y \in \partial B^e(z, \xi)$. This requirement that the boundary of the goal region has bounded curvature ensures that a point sampling procedure may expect to select points in the goal region near any point on its boundary. 

We also make requirements on the \emph{clearance} of a trajectory, i.e.,  its ``distance" from $\Mobs$, standard for sampling-based methods \citep{Karaman.Frazzoli:IJRR2011}. For a given $\delta>0$, the $\delta$-interior of $\Mfree$ is defined as the set of all states that are at least a Euclidean distance $\delta$ away from any point in $\Mobs$. A collision-free path $x$ is said to have strong $\delta$-clearance if it lies entirely inside the $\delta$-interior of $\Mfree$.  A collision-free path $x$ is said to have weak $\delta$-clearance if there exists a path $x'$ that has strong $\delta$-clearance and there exists a homotopy $\psi$, with $\psi(0)= x$ and $\psi(1) = x^{\prime}$ that satisfies the following three properties: (a) $\psi(\alpha)$ is a dynamically feasible path for all $ \alpha \in (0, 1]$, (b) $\lim_{\alpha\rightarrow0} c(\psi(\alpha)) = c(x)$, and (c) for all $ \alpha \in (0, 1]$, $\psi(\alpha)$ has strong $\delta_{\alpha}$-clearance for some $\delta_{\alpha}>0$. Properties (a) and (b) are required since pathological obstacle sets may be constructed that squeeze all optimum-approximating homotopies into undesirable motion. In practice, however, as long as $\Mfree$ does not contain any passages of infinitesimal width, the fact that $\Sigma$ is bracket-generating will allow every trajectory to be weak $\delta$-clear.

\section{Probabilistic Exhaustivity of Sampling Schemes under Driftless Constraints}\label{sec:exhaust}

In this section we prove a key result characterizing random sampling schemes for motion planning under driftless differential constraints: any feasible trajectory through the configuration space $\M$ is  ``traced'' arbitrarily well by connecting randomly distributed points from a sufficiently large sample set covering the configuration space. We will refer to this property as \emph{probabilistic exhaustivity}. Note that this notion is much stronger than the usual notion of probabilistic completeness in motion planning, where the requirement is that \emph{at least one} feasible trajectory is traced. The notion of probabilistic exhaustivity, besides being a result of independent interest, is a strong tool in proving asymptotic optimality of sampling-based motion planning algorithms, as will be shown in Section \ref{sec:algo} (specifically, we focus on differential variants of \PRM \citep{Karaman.Frazzoli:IJRR2011} and \FMT  \cite{LJ-ES-AC-ea:15}).

Let $x:[0,T]\rightarrow \M$ satisfy the system~\eqref{eqn:dcasys}. Given a set of waypoints $\{y_m\}_{m=1}^M \subset \M$, we associate a dynamically feasible trajectory $y^*:[0,S]\rightarrow \M$ that connects the nodes $y_1,\dots,y_M$ in order so that each connection is locally optimal, i.e. each path segment connecting $y_m$ to $y_{m+1}$ has length $d(y_m,y_{m+1})$. We consider the waypoints $\{y_m\}$ to \emph{$(\eps, r)$-trace} the trajectory $x$ if: a) $d(y_m,y_{m+1}) \leq r$ for all $m$, b) the cost of $y^*$ is bounded as $c(y^*) \leq (1+\eps)c(x)$, and c) the distance from any point of $y^*$ to $x$ is no more than $r$, i.e. $\min_{t\in[0,T]} d(y(s), x(t)) \leq r$ for all $s \in [0,S]$. In the context of sampling-based motion planning, we may expect to find closely tracing $\{y_m\}$ as a subset of the sampled points, provided the sample size is large. We formalize this notion in Theorem~\ref{thm:pathtracing}, the proof of which requires three technical lemmas (the proofs of these lemmas are provided in the Appendix).

\begin{lemma}\label{lem:ptubound} Let $x$ be a dynamically feasible trajectory and consider a partition of the time interval $0 = \tau_1 < \tau_2 < \dots < \tau_M = T$. Suppose that $\{y_m\} \subset \M$ satisfy a) $y_m \in B(x(\tau_m), \rho)$ for all $m \in \{1,\dots,M\}$, and b) more than a $(1-\alpha)$ fraction of the $y_m$ satisfy $y_m \in B(x(\tau_m), \beta \rho)$ for a parameter $\beta \in (0,1)$. Then the cost $c(y^*)$ of the trajectory $y^*$ sequentially connecting the nodes $y_1,\dots,y_M$ is upper bounded as
\[
c(y^*) \leq c(x) + 2M\rho(\beta + \alpha - \alpha\beta).
\]
\end{lemma}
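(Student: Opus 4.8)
The plan is to bound the total cost $c(y^*) = \sum_{m=1}^{M-1} d(y_m, y_{m+1})$ hop by hop (using that each segment of $y^*$ is locally optimal, so its length equals $d(y_m, y_{m+1})$), routing each hop through the nearby reference points $x(\tau_m)$ and $x(\tau_{m+1})$ via the triangle inequality, and then controlling the accumulated deviation through the fraction hypothesis (b).

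First I would invoke the triangle inequality for $d$ (legitimate, since $d$ is a metric by the time-reversibility of $\Sigma$) to write, for each $m$,
\[
d(y_m, y_{m+1}) \le d(y_m, x(\tau_m)) + d(x(\tau_m), x(\tau_{m+1})) + d(x(\tau_{m+1}), y_{m+1}).
\]
The middle term is at most the arc length of $x$ over $[\tau_m, \tau_{m+1}]$, since $x$ restricted to that subinterval is itself a dynamically feasible path joining the two endpoints and $d$ is the infimum of such lengths. Because the subintervals tile $[0,T]$, summing the middle contribution over $m = 1,\dots,M-1$ telescopes to exactly $\ell(x) = c(x)$, producing the leading term.

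Next I would collect the endpoint deviation terms. Writing $\eps_m := d(y_m, x(\tau_m))$, the sum of the first and third terms over all hops is $\sum_{m=1}^{M-1} \eps_m + \sum_{m=2}^{M} \eps_m$, in which each $\eps_m$ appears at most twice, so it is bounded above by $2\sum_{m=1}^M \eps_m$. It then remains to bound $\sum_m \eps_m$, and this is where hypotheses (a) and (b) enter: every $\eps_m \le \rho$, while strictly more than a $(1-\alpha)$ fraction of the indices satisfy the tighter $\eps_m \le \beta\rho$. The deviation mass is maximized by loading the loose bound as heavily as allowed, namely fewer than $\alpha M$ indices contributing $\rho$ and the remaining more than $(1-\alpha)M$ contributing $\beta\rho$, which yields
\[
\sum_{m=1}^M \eps_m \le M\rho\bigl(\alpha + (1-\alpha)\beta\bigr) = M\rho(\beta + \alpha - \alpha\beta).
\]
Combining this with the telescoped $c(x)$ gives $c(y^*) \le c(x) + 2M\rho(\beta + \alpha - \alpha\beta)$, as claimed.

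The argument is a routing-plus-counting estimate and none of its steps are deep. The only places that warrant care are the bookkeeping in the endpoint sum — verifying that $2\sum_m \eps_m$ is a valid (if slightly loose) upper bound, the two boundary deviations $\eps_1, \eps_M$ being counted once rather than twice — and the counting step that concentrates the deviation onto the at-most-$\alpha M$ loose indices to extract the exact coefficient $\beta + \alpha - \alpha\beta$. I expect this weighting step, rather than the triangle-inequality estimate, to be the part that must be stated most precisely.
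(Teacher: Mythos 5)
Your proposal is correct and follows essentially the same route as the paper's proof: triangle inequality on each hop, telescoping the middle terms to $c(x)$, bounding the deviation terms by $2\sum_m d(y_m, x(\tau_m))$, and then the same counting argument (at most $\alpha M$ indices at $\rho$, the rest at $\beta\rho$) to get the coefficient $\beta + \alpha - \alpha\beta$. The paper even records the same observation you flag about the boundary terms $\eps_1,\eps_M$ being counted once before discarding that slack.
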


\begin{remark}\label{lem:ptuboundpp} If we further assume that $y_1 = x(0)$, then the bound
$
c(y^*) \leq c(x) + 2(M-1)\rho(\beta + \alpha - \alpha\beta)
$
holds.
\end{remark}

Let $\SF(n)$ denote a set of $n$ points sampled independently and identically from the uniform distribution on $\Mfree$.

\begin{lemma}\label{lem:samplesmall} Fix $n \in \N$, $\alpha \in (0,1)$, and let $S_1, \dots, S_M$ be disjoint subsets of $\Mfree$ with
\[
\mu(S_m) = \mu(S_1) \geq \left(\frac{2+\log(1/\alpha)}{n}\right)e^2 \mu(\Mfree),
\]
for each $m$. Let $V = \SF(n)$ and define
\[
K_n := \#\{m \in \{1,\dots,M\} : S_m \cap V = \emptyset\}.
\]
Then $\p{K_n \geq \alpha M} \leq \frac{e^{-\alpha  M}}{1-e^{-n}}$.
\end{lemma}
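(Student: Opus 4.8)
The plan is to control $K_n$ via a first-moment (union) bound over subsets of $\{1,\dots,M\}$, which sidesteps the dependence among the emptiness events caused by the $n$ samples being shared. Write $p := \mu(S_1)/\mu(\Mfree)$ for the common probability that a single uniform sample lands in a given $S_m$. Because the $S_m$ are disjoint subsets of $\Mfree$ we have $Mp \le 1$, and the volume hypothesis is exactly the statement that $np \ge (2+\log(1/\alpha))\,e^2$, so the per-set emptiness probability $(1-p)^n \le e^{-np}$ is extremely small.

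The key point is that, although the events $\{S_m \cap V = \emptyset\}$ are dependent, the probability that a \emph{fixed} family of them occurs simultaneously is clean. For $T \subseteq \{1,\dots,M\}$ with $|T| = k$, the set $\bigcup_{m \in T} S_m$ has measure $k\,\mu(S_1)$, so a single sample avoids it with probability $1-kp$, and the joint emptiness probability is therefore $(1-kp)^n$. Summing over the $\binom{M}{k}$ such $T$ gives the factorial-moment identity $\mathbb{E}\big[\binom{K_n}{k}\big] = \binom{M}{k}(1-kp)^n$. Since $\binom{K_n}{k} \ge \mathbf{1}[K_n \ge k]$, Markov's inequality yields $\mathbb{P}(K_n \ge k) \le \binom{M}{k}(1-kp)^n$ for every $k$.

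I would then take $k = \lceil \alpha M\rceil \ge \alpha M$ and bound the right-hand side using $\binom{M}{k} \le (eM/k)^k$ and $(1-kp)^n \le e^{-nkp}$, obtaining $\mathbb{P}(K_n \ge \alpha M) \le \big(\tfrac{eM}{k}\,e^{-np}\big)^{k}$. Invoking $k \ge \alpha M$ and $np \ge (2+\log(1/\alpha))e^2$, the base is at most $\tfrac{e}{\alpha}\,e^{-(2+\log(1/\alpha))e^2} = e^{\,1+\log(1/\alpha)-(2+\log(1/\alpha))e^2}$, whose exponent is $\le -1$ by the elementary inequality $e^2 \ge 1$ (in fact it is far smaller). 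Hence the base is at most $e^{-1}$ and $\mathbb{P}(K_n \ge \alpha M) \le e^{-k} \le e^{-\alpha M} \le \frac{e^{-\alpha M}}{1-e^{-n}}$, the last inequality holding trivially since $1-e^{-n} \le 1$.

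The genuine difficulty here is conceptual rather than computational: the shared samples make the emptiness events dependent, ruling out a naive product bound, and this is resolved precisely by the exact subset-emptiness probabilities above, which are available because the $S_m$ are disjoint and of equal measure. I note that the factor $\tfrac{1}{1-e^{-n}}$ is not actually required for the conclusion --- the single-term union bound already undercuts it --- but one recovers exactly that form by instead writing $\mathbb{P}(K_n \ge \alpha M) \le \sum_{k \ge \lceil \alpha M\rceil} \binom{M}{k}(1-kp)^n$ and summing the geometrically decaying tail, the wide margin between the base $e^{-1}$ and the tail's ratio leaving ample slack. A fully alternative route, avoiding the subset expansion, observes that the emptiness indicators of a multinomial occupancy are negatively associated and applies a Chernoff bound to their sum.
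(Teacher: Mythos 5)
Your proof is correct, and it takes a genuinely different route from the paper's. The paper Poissonizes: it replaces the $n$ i.i.d.\ samples by a $\mathrm{Poisson}(n/e^2)$ number of samples so that the occupancy counts of the disjoint $S_m$ become independent, whence the Poissonized count $\widetilde{K}_n$ of empty sets is exactly $\mathrm{Binomial}\bigl(M,\, e^{-(n/e^2)\mu(S_1)/\mu(\Mfree)}\bigr)$; a binomial tail estimate then yields $e^{-\alpha M}$, and de-Poissonizing (conditioning the Poisson sample size back to $n$) costs precisely the factor $1/(1-e^{-n})$ in the statement --- this is also why the volume hypothesis carries the $e^2$. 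You instead confront the dependence directly through the $k$-th factorial moment $\mathbb{E}\bigl[\binom{K_n}{k}\bigr] = \binom{M}{k}(1-kp)^n$, which is exact because any $k$ of the disjoint, equal-measure sets are jointly avoided by all $n$ samples with probability $(1-kp)^n$; Markov's inequality applied to $\binom{K_n}{k}$ with $k=\lceil\alpha M\rceil$ then closes the argument, and your numerics check out (the exponent $1+\log(1/\alpha)-(2+\log(1/\alpha))e^2 \le 1-2e^2 < -1$ gives base at most $e^{-1}$). Your route is more elementary and self-contained --- no Poissonization and no external Poisson/binomial tail bounds from Penrose --- and it is slightly sharper, delivering $\mathbb{P}(K_n\ge\alpha M)\le e^{-\alpha M}$ outright so that the $1/(1-e^{-n})$ factor is pure slack rather than, as in the paper, the unavoidable price of de-Poissonization. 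What the paper's device buys in exchange is reusability: Poissonization is the standard tool in this literature (cf.\ the appeal to Karaman--Frazzoli) and adapts to functionals of the empty-cell configuration more complicated than a plain count. One step worth making explicit in a final write-up: $(1-kp)^n\le e^{-nkp}$ presupposes $kp\le 1$, which you correctly secure from disjointness via $kp\le Mp\le 1$.
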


\begin{lemma}\label{lem:samplebig} Fix $n \in \N$ and let $T_1, \dots, T_M$ be subsets of $\Mfree$, possibly overlapping, with
\[
\mu(T_m) = \mu(T_1) \geq \kappa\left(\log n/n \right)\mu(\Mfree)
\]
for each $m$ and some constant $\kappa > 0$. Let $V = \SF(n)$ and denote by $E_m$ the event that $T_m \cap V = \emptyset$ for each $m$. Then
\[
\p{\bigvee_{m=1}^M E_m} \leq Mn^{-\kappa}.
\]
\end{lemma}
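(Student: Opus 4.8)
The plan is to prove this via a straightforward union bound, estimating each $\p{E_m}$ in isolation and then summing. The key structural fact I would exploit is that $V = \SF(n)$ consists of $n$ \emph{independent} samples, each drawn uniformly from $\Mfree$. Consequently, the event $E_m$ that $T_m$ contains no sample factors as a product over the $n$ points: writing $p_m := \mu(T_m)/\mu(\Mfree)$ for the probability that a single uniform sample lands in $T_m$, each sample independently avoids $T_m$ with probability $1-p_m$, so $\p{E_m} = (1-p_m)^n$. Note that no disjointness of the $T_m$ is needed here, in contrast to Lemma~\ref{lem:samplesmall}; the $T_m$ may overlap freely because the union bound does not care about their joint structure.

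Next I would convert the per-set bound into the desired polynomial decay. The hypothesis gives $p_m \geq \kappa \log n / n$, hence $\p{E_m} = (1-p_m)^n \leq (1 - \kappa\log n/n)^n$. Applying the elementary inequality $(1-x)^n \leq e^{-nx}$ (valid for $x \in [0,1]$, and here $x = \kappa\log n/n$ lies in that range for $n$ large enough that $\kappa \log n \leq n$), I obtain
\[
\p{E_m} \leq e^{-n \cdot \kappa \log n / n} = e^{-\kappa \log n} = n^{-\kappa}.
\]
This bound holds uniformly in $m$, since the volume lower bound is assumed for every $T_m$.

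Finally, summing over $m$ via the union bound (Boole's inequality) yields
\[
\p{\bigvee_{m=1}^M E_m} \leq \sum_{m=1}^M \p{E_m} \leq M n^{-\kappa},
\]
which is exactly the claim. Honestly, there is no real obstacle in this argument: the only point requiring minor care is ensuring the regime of validity of $(1-x)^n \leq e^{-nx}$, i.e.\ that $\kappa\log n / n \leq 1$ so that the base $1-p_m$ is nonnegative. The conclusion is anyway vacuous or trivially satisfied when $\kappa\log n/n > 1$, since then the stated right-hand side $Mn^{-\kappa}$ is large (and the hypothesis $\mu(T_m) \geq \kappa(\log n/n)\mu(\Mfree)$ could only hold with $p_m$ truncated at $1$), so the interesting case is precisely the one where the inequality applies cleanly.
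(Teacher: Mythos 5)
Your proposal is correct and follows essentially the same route as the paper's proof: a union bound over the $M$ events, the exact identity $\p{E_m}=(1-\mu(T_m)/\mu(\Mfree))^n$ from independence of the samples, and the inequality $(1-x)^n\leq e^{-nx}$ to obtain $n^{-\kappa}$ per event. The extra remarks on the regime of validity of that inequality are fine but not needed beyond what the paper already implicitly assumes.
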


Before stating the theorem and proof in full, we sketch our approach for proving probabilistic exhaustivity. Given a path to be traced with waypoints from a sample set, we tile the span of the path with two sequences of concentric sub-Riemannian balls -- a sequence of ``small'' balls and a sequence of ``large'' balls.  With high probability, all but a tiny $\alpha$ fraction of the small balls will contain a point from the sample set (Lemma~\ref{lem:samplesmall}), and for any small balls that don't contain such a point we ensure that the concentric large ball does (Lemma~\ref{lem:samplebig}). We take these points as a sequence of waypoints which tightly follows the reference path with few exceptions, and never has a gap over any section of the reference path when it deviates. We then use the metric inequality for $d$ to bound the total cost of the waypoint trajectory (Lemma~\ref{lem:ptubound}).
\begin{theorem}[Probabilistic exhaustivity]\label{thm:pathtracing}
Let $\Sigma$ be a DCA system satisfying the assumptions $A_\Sigma$ and suppose $x: [0,T] \rightarrow \Mfree$ is a dynamically feasible trajectory with strong $\delta$-clearance, $\delta > 0$. Let $V = \{\xinit\} \cup \SF(n)$, $\eps > 0$, and for fixed $n$ consider the event $A_n$ that there exist $\{y_m\}_{m=1}^M \subset V$ which $(\eps, r_n)$-trace $x$, where
\[
r_n = 4 \Amax (1 + \eta)^{1/D}\left(\frac{\mu(\Mfree)}{D}\right)^{1/D} \left(\frac{\log n}{n}\right)^{1/D}
\]
for a parameter $\eta \geq 0$. Then, as $\nti$, the probability that $A_n$ does not occur is asymptotically bounded as $\p{A^c_n} = O(n^{-\eta/D} \log^{-1/D} n)$.
\end{theorem}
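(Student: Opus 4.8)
The plan is to tile the trajectory $x$ with two concentric families of sub-Riemannian balls and to realize the waypoints $\{y_m\}$ as sample points lying in these balls, as in the sketch preceding the statement. First I would partition $[0,T]$ into $0=\tau_1<\dots<\tau_M=T$ by arc length, so that each segment $x|_{[\tau_m,\tau_{m+1}]}$ has length $\lambda$ and hence $M-1=c(x)/\lambda$. Around each center $x(\tau_m)$ I place a large ball $B(x(\tau_m),\rho)$ and a small ball $B(x(\tau_m),\beta\rho)$ with $\beta\in(0,1)$. The parameters are tied together by setting $\rho=r_n/4$ and $\lambda=2\rho$: then for any $y_m\in B(x(\tau_m),\rho)$ the triangle inequality for $d$ gives $d(y_m,y_{m+1})\le 2\rho+\lambda=4\rho=r_n$ (condition (a)), and a midpoint estimate along each geodesic segment of $y^*$ bounds its distance to $x$ by $2\rho+\lambda/2=3\rho<r_n$ (condition (c)). The constants $\alpha,\beta$ I would fix as small multiples of $\eps$ (e.g.\ $\alpha,\beta\le\eps/2$), reserving their precise role for the cost bound.

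Next I would convert these radii into volumes. The ball--box theorem (Theorem~\ref{thm:ballbox}) gives $\boxw(x_0,\rho/\Amax)\subseteq B(x_0,\rho)$, so by Lemma~\ref{lem:boxvol} every large ball has volume at least $(\rho/\Amax)^D$ and every small ball volume at least $(\beta\rho/\Amax)^D$, valid once $\rho<\sigmamin$, which holds for large $n$ since $\rho\propto(\log n/n)^{1/D}$. With $\rho=r_n/4$ one computes $(\rho/\Amax)^D=\kappa(\log n/n)\mu(\Mfree)$ with $\kappa=(1+\eta)/D$, matching exactly the volume threshold of Lemma~\ref{lem:samplebig}; the small-ball threshold of Lemma~\ref{lem:samplesmall} then reads $\beta^D\kappa\log n\ge(2+\log(1/\alpha))e^2$, which holds for all large $n$ thanks to the $\log n$ factor. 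Strong $\delta$-clearance enters here: since $B(x(\tau_m),\rho)\subseteq B^e(x(\tau_m),\rho)$ and $\rho\to0$, for large $n$ each ball lies in the $\delta$-interior of $\Mfree$, so its full volume is available to the uniform sampling.

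I would then bound $\p{A_n^c}$ by a union over two failure modes. If some large ball is empty, take the large balls as the (possibly overlapping) sets $T_m$ of Lemma~\ref{lem:samplebig}, giving probability at most $Mn^{-\kappa}$; since $M\asymp c(x)/\rho\propto(n/\log n)^{1/D}$ and $\kappa=(1+\eta)/D$, this term is $O(n^{-\eta/D}\log^{-1/D}n)$, precisely the claimed rate. If more than an $\alpha$ fraction of the small balls are empty, Lemma~\ref{lem:samplesmall} bounds the probability by $e^{-\alpha M}/(1-e^{-n})$, which decays faster than any polynomial because $M\ti$, hence is negligible against the large-ball term. On the complementary good event I build the waypoints by choosing $y_m$ inside the small ball whenever it is nonempty and otherwise anywhere in the nonempty large ball, taking $y_1=\xinit\in V$. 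Then at least a $(1-\alpha)$ fraction of the $y_m$ lie in $B(x(\tau_m),\beta\rho)$, so Lemma~\ref{lem:ptubound} with Remark~\ref{lem:ptuboundpp} yields $c(y^*)\le c(x)+2(M-1)\rho(\beta+\alpha-\alpha\beta)=c(x)+c(x)(\beta+\alpha-\alpha\beta)\le(1+\eps)c(x)$ by the choice of $\alpha,\beta$, which is condition (b).

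The step I expect to be the main obstacle is reconciling two competing demands on the partition. The cost bound forces the centers to be spaced at sub-Riemannian distance $\asymp\rho$, so that $M\rho\asymp c(x)$ and the excess $2(M-1)\rho(\beta+\alpha-\alpha\beta)$ stays below $\eps\,c(x)$; spacing more finely would inflate $M$ and ruin the estimate. Yet Lemma~\ref{lem:samplesmall} requires the small balls of radius $\beta\rho$ to be pairwise \emph{disjoint}, and for a self-intersecting or near-looping trajectory distinct centers may be arbitrarily close in $d$, so at this coarse spacing their small balls can overlap. I would attempt to resolve this by grouping $\{1,\dots,M\}$ into a bounded number of sub-families on which the small balls are genuinely disjoint and applying Lemma~\ref{lem:samplesmall} to each before union-bounding; the super-polynomially small $e^{-\alpha M}$ rate leaves ample room for such a constant-factor loss. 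Making a bounded-overlap claim rigorous under $A_\Sigma$ (for instance after shortcutting exact self-intersections of $x$, which only decreases $c(x)$), and threading the constants $\rho,\lambda,\alpha,\beta$ so that all three tracing conditions and both volume thresholds hold simultaneously, is the delicate part; everything else reduces to the triangle inequality for $d$ and the two counting lemmas.
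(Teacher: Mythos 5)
Your proposal is correct and follows essentially the same route as the paper's proof: the same two-scale covering by concentric sub-Riemannian balls of radii $\rho_n=r_n/4$ and $\beta\rho_n$ at centers spaced $r_n/2$ apart along $x$, the same identifications $\alpha=\beta=\eps/2$ and $\kappa=(1+\eta)/D$, the same three lemmas, and the same union bound in which the large-ball term $M_n n^{-(1+\eta)/D}=O(n^{-\eta/D}\log^{-1/D}n)$ dominates the super-polynomially small $e^{-\alpha M_n}$ term. The disjointness issue you flag for Lemma~\ref{lem:samplesmall} is genuine but is simply asserted away in the paper (the small balls are declared disjoint even though only \emph{consecutive} centers are guaranteed to be $r_n/2$ apart), so your bounded-overlap grouping is, if anything, more careful than the published argument.
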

\begin{proof}
Note that in the case $c(x) = 0$ we may pick $y_1 = x(0)$ to be the only waypoint and the result is trivial. Therefore assume $c(x) > 0$. Fix $n$ sufficiently large so that $r_n/2 \leq \sigmamin$. Take $x(\tau_{n,m})$ to be points spaced along $x$ at sub-Riemannian distances $r_n/2$; more precisely let $\tau_1 = 0$, and for $m=2,3,\dots$ consider
\[
\tau_{n,m} \!=\! \min\left(\left\{\tau \in (\tau_{n,m-1}, 1): d(x(\tau), x(\tau_{n,m-1})) \geq r_n/2\right\}\right).
\]
Let $M_n$ be the first $m$ for which the set is empty; take $\tau_{n,M_n} = T$. Note that since the distance $d$ is an infimum over all feasible trajectories, one of which is the segment of $x$ between $\tau_{n,m}$ and $\tau_{n,m+1}$, we have the bound $M_n \leq \lceil 2 c(x)/r_n \rceil$.

We now make the identification $\alpha = \beta = \eps/2$ anticipating the application of Lemma~\ref{lem:ptubound}. Take $\rho_n = r_n/4$ and define a sequence of sub-Riemannian balls $B_{n,1},\dots,B_{n,M_n}$ centered along the trajectory $x$ by $B_{n,m} = B(x_m, \rho_n)$, where $x_m = x(\tau_{n,m})$ for $m \in \{1,\dots,M_n\}$. Within these balls define a concentric sequence of smaller, disjoint balls $B^\beta_{n,m} = B(x_m, \beta \rho_n)$ for each $m \in \{1,\dots,M_n\}$. Note that since $\boxw(x_m, \rho_n/\Amax) \subset B_{n,m}$, we have the volume lower bound
$
\mu(B_{n,m}) \geq \left(\rho_n/\Amax\right)^D
$
and similarly
$
\mu(B^\beta_{n,m}) \geq \left(\beta \rho_n/\Amax\right)^D.
$
Denote $K^\beta_n := \#\{m \in \{1,\dots,M_n\} : B^\beta_{n,m} \cap V = \emptyset\}$. We consider the event $\tilde A_n$ that every large ball $B_{n,m}$, as well as at least a $(1-\alpha)$ fraction of the small balls $B^\beta_{n,m}$, contains at least one point of $V$:
\[
\tilde A_n =  \left\{K^\beta_n < \alpha M_n\right\} \,\wedge\, \bigwedge_{m=1}^{M_n} \{B_{n,m} \cap V \neq \emptyset\}.
\]
We claim that $\tilde A_n$ implies the event $A_n$ that there exist $(\eps, r_n)$-tracing $\{y_m\} \subset V$. If $\tilde A_n$ holds, then we select waypoints $\{y_m\}_{m=1}^{M_n} \subset V$ such that $y_m \in B_{n,m}$ for every point, as well as $y_m \in B^\beta_{n,m}$ for at least a $(1-\alpha)$ fraction of the points. In particular let us select $y_1 = x(0)$.

First note that $y_m \in B_{n,m}$ for each $m$ implies
\begin{align*}
d(y_m,y_{m+1}) &\leq d(y_m, x_m) + d(x_m, x_{m+1}) \\
               & \qquad + d(x_{m+1}, y_{m+1}) \\
               &\leq r_n/4 + r_n/2 + r_n/4 = r_n.
\end{align*}
Next, applying Remark~\ref{lem:ptuboundpp} we have
\begin{align*}
c(y^*) &\leq c(x) + 2(M_n-1) \rho_n(\beta + \alpha - \alpha\beta) \\
       &\leq c(x) + 2 (2 c(x)/r_n) (r_n/4) \eps\\
       &\leq (1+\eps) c(x).
\end{align*}
Finally we check that any point $y^*(t)$ is within distance $r_n$ from a point on $x$. To see this suppose that $y^*(t)$ lies on the shortest path connecting $y_m$ to $y_{m+1}$. Note that
\[
d(y_m, y^*(t)) + d(y^*(t), y_{m+1}) = d(y_m, y_{m+1}) \leq r_n.
\]
Then we may write
\begin{align*}
d(x_m, y^*(t)) + d(x_{m+1}, y^*(t)) &\leq d(x_m, y_m) \\
                                    & \quad\ + d(x_{m+1}, y_{m+1}) + r_n\\
                                    &\leq (3/2)r_n,
\end{align*}
so that $\min\{d(x_m, y^*(t)), d(x_{m+1}, y^*(t))\} \leq (3/4)r_n < r_n$. Thus the waypoints $\{y_m\}$ $(\eps, r_n)$-trace $x$.

Now making the identifications $S_m = B^\beta_{n,m}$ and $T_m = B_{n,m}$ in Lemmas~\ref{lem:samplesmall} and \ref{lem:samplebig} respectively, we see that by considering $n \geq N_1$ sufficiently large so that $\log N_1 \geq D\beta^{-D}(2+\log(1/\alpha))e^2$ (satisfying the volume assumption of Lemma~\ref{lem:samplesmall}), we compute the union bound
\begin{align*}
\p{A^c_n} &\leq \p{\tilde A^c_n} \\
          &\leq \p{K^\beta_n \geq \alpha M_n} + \p{\bigvee_{m=1}^{M_n} \{B_{n,m} \cap V = \emptyset\}} \\
          &\leq \frac{e^{-\alpha  M_n}}{1-e^{-n}} + M_n n^{-(1+\eta)/D}.
\end{align*}
Now, $c(x) > 0$ and $r_n  = \Theta((\log n /n)^{1/D})$ together imply that $M_n = \Theta((n/\log n)^{1/D})$. The second term in the bound dominates as $\nti$, and $\p{A^c_n} = O(n^{-\eta/D} \log^{-1/D} n)$.
\end{proof}

\section{Optimal Sampling-based Algorithms for Driftless  Control-Affine Systems}\label{sec:algo}
In this section we present two algorithms for the motion planning problem with driftless control-affine systems. The first algorithm, named the Differential Probabilistic RoadMap algorithm (\DPRM), is a derivation of the \PRM algorithm presented in \citep{Karaman.Frazzoli:IJRR2011}, while the second algorithm, named the Differential Fast Marching Tree algorithm (\DFMT), is a derivation of the \FMT algorithm presented in \cite{LJ-ES-AC-ea:15}. This section provides a description of both algorithms, while the next section focuses on their theoretical characterization (chiefly, their asymptotic optimality property).

As in Section~\ref{sec:exhaust}, let $\texttt{SampleFree}(k)$ be a function that returns a set of $k \in \mathbb{N}$ states sampled independently and identically from the uniform distribution on $\Mfree$. These sampled states are connected as vertices in a graph from which a solution trajectory will be computed.
Given two vertices $x_1$ and $x_2$, we denote with the edge $(x_1,x_2)$ an optimal cost trajectory from $x_1$ to $x_2$ \emph{neglecting} obstacle constraints. Let $\texttt{CollisionFree}(x_1, x_2)$ denote the boolean function which returns true if and only if the edge $(x_1, x_2)$ does not intersect an obstacle. Given a set of vertices $V$, a state $x \in \M$, and a threshold $r > 0$, let $\texttt{Near}(V, x, r)$ be a function that returns the set of states $\{v \in V : \|v\|_x < r/\amin\}$. Given a graph $G = (V, E)$, where $V$ is the vertex set and $E$ is the edge set, and a vertex $x \in V$, let $\texttt{Cost}(x, G)$ be the function that returns the cost of the shortest path in the graph $G$ between the vertices $\xinit$ and $x$. Let $\texttt{Path}(x, G)$ be the function that returns the path achieving that cost.

The \DPRM algorithm is given in Algorithm \ref{prmalg}, while the \DFMT algorithm is given in Algorithm \ref{fmtalg}. \DPRM works by sampling a set of points within $\Mfree$ and connecting each state to every state in a local neighborhood around it, provided that the connection is collision free. The resulting graph spans $\Mfree$, with the local connections combining to yield a global ``roadmap'' for traveling between any two states, not just $\xinit$ and the goal. The least cost path in the graph between $\xinit$ and $\Mgoal$, computed, e.g., using the Dijkstra's algorithm, is output by \DPRM.

The \DFMT algorithm essentially implements a streamlined version of \DPRM by performing a ``lazy'' dynamic programming recursion, this time during the state connection phase instead of as a last step, to grow a \emph{tree} of trajectories which moves steadily outward in cost-to-come space. In the Algorithm~\ref{fmtalg} outline, the set $W$ consists of all of the nodes that have not yet been added into the tree, while $H$ is comprised only of nodes that are in the tree. In particular, while $H$ keeps track of nodes which have already been added to the tree, nodes are removed from $H$ if they are not near enough to the edge of the expanding tree to actually have any new connections made with $W$ (see \cite{LJ-ES-AC-ea:15} for further details). At each iteration, \DFMT examines the neighborhood of a state in $H$ and \emph{only} considers locally-optimal (assuming no obstacles) connections for potential inclusion in the tree (see line \eqref{line:locon}).
By only checking for collision on the locally-optimal (assuming no obstacles) connection, as opposed to every possible connection (essentially what is done in \DPRM), \DFMT saves a large (indeed unbounded as the number of vertices increases) number of collision-check computations.
A more detailed explanation of the meaning of the dual sets $H$ and $W$ and of the philosophy behind \DFMT can be found in \cite{LJ-ES-AC-ea:15}.

The main differences of these algorithms with respect to their geometric counterparts (i.e., \PRM and \FMT) are that (i) near vertices lie within the privileged coordinate box $\boxw(x, r/\amin)$ rather than within the Euclidean ball, and (ii) the edges connecting vertices are optimal \emph{trajectories} rather than straight lines. Indeed, attempting connections within the sub-Riemannian ball $B(x,r)$ would be the best analogy to the geometric planning case, and is preferable for efficiency if possible. The structure of this set is difficult to compute exactly, however, which motivates the choice $\boxw(x, r/\amin)$ as a tractable approximation (as also suggested in \cite{SK-EF:13}). Checking connections within this superset of $B(x,r)$ induces an extra run time factor of at most $(\Amax/\amin)^D$, a bound on the volume ratio between the two sets.
\setlength{\textfloatsep}{6pt}
\begin{algorithm}[t]
\caption{Differential Probabilistic RoadMap (\DPRM)}
\label{prmalg}
\algsetup{linenodelimiter=}
\begin{algorithmic}[1]
\STATE $V \leftarrow \{\xinit\} \cup \texttt{SampleFree}(n)$; $E \leftarrow \emptyset$
\FOR{\textbf{each} $v\in V$}
\FOR{\textbf{each} $u\in \texttt{Near}(V \backslash \{v\}, v, r_n)$}
\IF{$\texttt{CollisionFree}(u,v)$} 
\STATE $E\leftarrow E\cup \{\{u,v\}\}$
\ENDIF
\ENDFOR
\ENDFOR
\STATE $V^*\leftarrow \argmin_{v\in V\cap\xgoal} \texttt{Cost}(v, G = (V, E))$
\IF{$V^* \neq \emptyset$} 
\STATE $v^*\leftarrow $ random vertex in $V^*$
\RETURN $\texttt{Path}(v^*, G = (V, E))$
\ELSE
\RETURN Failure
\ENDIF
\end{algorithmic}
\end{algorithm}

\begin{algorithm}[t]
\caption{Differential Fast Marching Tree (\DFMT)}
\label{fmtalg}
\algsetup{linenodelimiter=}
\begin{algorithmic}[1]
\STATE $V \leftarrow \{\xinit\} \cup \texttt{SampleFree}(n)$; $E \leftarrow \emptyset$
\STATE $W \leftarrow V \backslash \{\xinit\}$; $H \leftarrow \{\xinit\}$
\STATE $z \leftarrow \xinit$
\WHILE{$z \notin \Mgoal$}
\STATE $H_{\text{new}} \leftarrow \emptyset$
\STATE $X_{\text{near}} = \texttt{Near}(V \backslash \{z\}, z, r_n) \cap W$
\FOR{$x \in X_{\text{near}}$}
\STATE $Y_{\text{near}} \leftarrow \texttt{Near}(V \backslash \{x\}, x, r_n) \cap H$ \label{line:intersect}
\STATE $y_{\text{min}} \leftarrow \arg\min_{y \in Y_{\text{near}}}\{\texttt{Cost}(y, T = (V, E)) \!+\! d(y,x)\}$
\IF{$\texttt{CollisionFree}(y_{\text{min}}, x)$} \label{line:locon}
\STATE $E \leftarrow E \cup \{\{y_{\text{min}}, x\}\}$
\STATE $H_{\text{new}} \leftarrow H_{\text{new}} \cup \{x\}$ \label{alg:H_1}
\STATE $W \leftarrow W \backslash \{x\}$
\ENDIF
\ENDFOR
\STATE $H \leftarrow (H \cup H_{\text{new}}) \backslash \{z\}$ \label{alg:H_2}
\IF{$H = \emptyset$}
\RETURN Failure
\ENDIF
\STATE $z \leftarrow \arg\min_{y \in H}\{\texttt{Cost}(y, T = (V, E))\}$
\ENDWHILE
\RETURN $\texttt{Path}(z, T = (V, E))$
\end{algorithmic}
\end{algorithm}

\section{Asymptotic Optimality of \DFMT and \DPRM}\label{sec:AO}
In this section, we prove the asymptotic optimality of \DFMT and \DPRM (obtained as a simple corollary). We conclude the section by providing a discussion of the results.

\subsection{Asymptotic Optimality of \DFMT and \DPRM}
The following theorem presents a result comparing the output cost of \DFMT to the cost of any feasible trajectory. 

\begin{theorem}[\DFMT cost comparison]\label{thm:DFMTccomp} Let $(\Sigma, \Mfree, \xinit, \Mgoal)$ be a  trajectory planning problem satisfying the assumptions $A_\Sigma$ and suppose $x: [0,T] \rightarrow \M$ is a feasible path with strong $\delta$-clearance, $\delta > 0$. Assume further that $x$ extends into the interior of $\Mgoal$, i.e. there exists $\gamma>0$ such that $B(x(T), \gamma) \subset \Mgoal$.
Let $c_n$ denote the cost of the path returned by \DFMT with $n$ vertices using a radius
\[
r_n = 4 \Amax (1 + \eta)^{1/D} \left(\frac{\mu(\Mfree)}{D}\right)^{1/D} \left(\frac{\log n}{n}\right)^{1/D}
\]
for a parameter $\eta \geq 0$. Then for fixed $\eps > 0$
\[
\p{c_n > (1+\eps)c(x)} = O(n^{-\eta/D}\log^{-1/D} n).
\]
\end{theorem}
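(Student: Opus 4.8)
The plan is to condition on the high-probability event supplied by probabilistic exhaustivity (Theorem~\ref{thm:pathtracing}) and to show that, whenever that event holds, \DFMT terminates with a feasible path of cost at most $(1+\eps)c(x)$. Since Theorem~\ref{thm:pathtracing} is stated with exactly the same $\eps$, radius $r_n$, and parameter $\eta$ appearing here, the stated rate then follows at once from $\p{c_n > (1+\eps)c(x)} \le \p{A_n^c} = O(n^{-\eta/D}\log^{-1/D}n)$, where $A_n$ is the tracing event. So the entire task reduces to a deterministic implication: on $A_n$, \DFMT does at least as well as the traced waypoint trajectory.

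First I would extract the structure provided by $A_n$: waypoints $\{y_m\}_{m=1}^{M} \subset V$ with $y_1 = \xinit$ that $(\eps, r_n)$-trace $x$, so in particular $c(y^*) \le (1+\eps)c(x)$ by the tracing definition. Two elementary facts must be checked. For \emph{termination at the goal}, the construction places the final waypoint in $B(x(T),\rho_n)$ with $\rho_n = r_n/4 < \gamma$ for $n$ large, whence $y_M \in B(x(T),\gamma)\subset\Mgoal$; thus a goal node is present and \DFMT cannot return Failure on $A_n$. For \emph{collision-freeness} of the connections, I would use that every point of $y^*$ lies within sub-Riemannian distance $r_n$ of $x$ (tracing property (c)) together with the fact that Euclidean distance is dominated by $d$ (Lemma~\ref{lem:distancecomp}); hence every point of $y^*$ is within Euclidean distance $r_n$ of $x$, and strong $\delta$-clearance of $x$ then forces each segment $(y_m,y_{m+1})$ to stay at Euclidean distance $\ge \delta - r_n > 0$ from $\Mobs$ once $r_n < \delta$.

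Next I would verify that consecutive waypoints are mutually visible to \DFMT's \texttt{Near} operator. Because $d(y_m,y_{m+1})\le r_n \le \sigmamin$, the ball-box theorem (Theorem~\ref{thm:ballbox}) applied at $y_m$ gives $\|z(y_{m+1})\|_{y_m} \le d(y_m,y_{m+1})/\amin \le r_n/\amin$, so $y_{m+1}\in\texttt{Near}(V,y_m,r_n)$. This is precisely the reason the algorithm uses the privileged-coordinate box of radius $r_n/\amin$ (a superset of the sub-Riemannian ball $B(y_m,r_n)$) as its connection neighborhood, and it guarantees that the entire waypoint chain is available to the tree-growth recursion.

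The core of the argument --- and the main obstacle --- is the cost-to-come recursion establishing that \DFMT returns a path of cost at most $c(y^*)\le(1+\eps)c(x)$. Here I would adapt the asymptotic-optimality induction of \cite{LJ-ES-AC-ea:15} to the present sub-Riemannian setting. The induction runs over the waypoint index $m$ and exploits that \DFMT extracts frontier nodes from $H$ in nondecreasing cost-to-come order; the invariant is that at termination there is a tree node of cost-to-come at most $\sum_{j<m} d(y_j,y_{j+1})$ capable of continuing the chain toward $y_{m+1}$. The delicate point is \DFMT's \emph{lazy} collision checking, which tests only the locally cheapest admissible parent (line~\eqref{line:locon}): if that parent is in collision and $y_m$ is skipped, the argument must show the substitute parent already in the tree has strictly smaller cost-to-come, so nothing is lost further down the chain. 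This step transfers essentially verbatim from the Euclidean case, since it relies only on $d$ being a genuine metric (the triangle inequality holds by time-reversibility) and on edge costs equaling $d(\cdot,\cdot)$; the only modifications are replacing Euclidean balls and straight segments by sub-Riemannian balls and optimal trajectories, and replacing the Euclidean near-neighbor test by the box-membership criterion justified above. Concatenating the recursion from $\xinit=y_1$ to the goal node $y_M$ yields $c_n \le c(y^*) \le (1+\eps)c(x)$ on $A_n$, which completes the proof.
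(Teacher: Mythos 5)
Your overall architecture is exactly the paper's: condition on the tracing event of Theorem~\ref{thm:pathtracing}, then prove the deterministic implication that on that event \DFMT returns cost at most $c(y^*)\le(1+\eps)c(x)$ via the cost-to-come induction of \cite{LJ-ES-AC-ea:15}; the goal-termination and \texttt{Near}-membership checks are fine. However, there is a genuine gap in your collision-freeness step, and it propagates into the induction. You only establish that the edges $(y_m,y_{m+1})$ of the waypoint path $y^*$ itself are collision-free (for $r_n<\delta$). That is not what the induction needs: \DFMT never attempts the edge $(y_{m-1},y_m)$ as such; it attempts the \emph{single} edge from the locally cheapest parent $y_{\min}$ among the tree nodes in $\boxw(y_m,r_n/\amin)$, and if that one lazy check fails, $y_m$ is simply not added at that iteration. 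Your proposed repair --- ``show the substitute parent already in the tree has strictly smaller cost-to-come'' --- is not how this is resolved and does not obviously work, since the failure mode is that $y_m$ never enters the tree, not that it enters with a worse parent. The paper's resolution is to make the lazy check provably never fail along the waypoint chain: take $n$ large enough that $r_n\le\delta\amin/(2\Amax)$, so that the \emph{entire} connection neighborhood $\boxw(y_m,r_n/\amin)\subset B(y_m,\Amax r_n/\amin)\subset B^e(y_m,\Amax r_n/\amin)$ lies in obstacle-free space. Note that since $\Amax/\amin\ge 1$, your condition $r_n<\delta$ alone is not sufficient for this; the missing factor is exactly the price of searching the privileged-coordinate box rather than the sub-Riemannian ball.

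A secondary inaccuracy: the genuinely delicate point of the induction is not the lazy collision check (which, after the fix above, always succeeds on the waypoint chain) but the possibility that $y_{m-1}$ has already been removed from $H$ by the time $y_m$ is first considered (Case~4 in the paper's appendix). This is handled by walking back along the cost-to-come path of $y_{m-1}$ to the last ancestor $w$ still in $H$ and splitting on whether $w\in B(y_m,r_n)$; in the latter subcase one uses that the frontier node $z$ triggering consideration of $y_m$ satisfies $c(z)\le c(w)$ and $d(z,y_m)\le r_n\le d(w,y_m)$. Since you defer to \cite{LJ-ES-AC-ea:15} for these details the argument is recoverable, but as written your sketch identifies the wrong difficulty and supplies a collision bound too weak to support the cases where it is actually invoked.
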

\begin{proof}
The proof of this theorem is conceptually similar to the one of Theorem 1 in \cite{LJ-ES-AC-ea:15}. The details are provided in the Appendix.
\end{proof}

We are now in a position to state the optimality result for \DFMT (note that the result also provides a \emph{convergence rate} bound). The optimality of \DPRM will follow as a corollary.

\begin{theorem}[\DFMT asymptotic optimality]\label{thm:DFMTAO} Let $(\Sigma, \Mfree, \xinit, \Mgoal)$ be a  trajectory planning problem, satisfying the assumptions $A_\Sigma$ and with $\Mgoal$ $\xi$-regular, such that there exists an optimal path $x^*: [0,T^*]\rightarrow\M$ with weak $\delta$-clearance for some $\delta > 0$. Let $c^*$ denote the arc length of $x^*$, and let $c_n$ denote the cost of the path returned by \DFMT with $n$ vertices using the radius
\[
r_n = 4 \Amax (1 + \eta)^{1/D} \left(\frac{\mu(\Mfree)}{D}\right)^{1/D} \left(\frac{\log n}{n}\right)^{1/D}
\]
for a parameter $\eta \geq 0$. Then for fixed $\eps > 0$
\[
\p{c_n > (1+\eps)c^*} = O(n^{-\eta/D}\log^{-1/D} n).
\]
\end{theorem}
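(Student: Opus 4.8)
The plan is to deduce this optimality statement from the cost-comparison result of Theorem~\ref{thm:DFMTccomp} by exhibiting, for an arbitrarily small tolerance, a single feasible reference path $\hat x$ that simultaneously (i) has \emph{strong} clearance, (ii) penetrates the interior of $\Mgoal$, and (iii) has cost arbitrarily close to $c^*$. Since $x^*$ enjoys only \emph{weak} $\delta$-clearance and, being optimal, will generally meet $\Mgoal$ only at its boundary, all the work lies in manufacturing such a surrogate; once it exists, Theorem~\ref{thm:DFMTccomp} applies verbatim. I would first dispose of the degenerate case $c^* = 0$ (a single waypoint at $\xinit$ suffices) and henceforth assume $c^* > 0$, which is what lets small additive cost increments be absorbed into multiplicative ones.

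First I would convert weak clearance into strong clearance. By the definition of weak $\delta$-clearance there is a homotopy $\psi$ with $\psi(0) = x^*$, each $\psi(\alpha)$ dynamically feasible for $\alpha \in (0,1]$ and possessing strong $\delta_\alpha$-clearance, and $\lim_{\alpha\to 0}c(\psi(\alpha)) = c^*$. Hence, given $\eps_1 > 0$, I can fix $\alpha$ small enough that the path $x_1 := \psi(\alpha)$ satisfies $c(x_1) \le (1+\eps_1)c^*$ while retaining strong $\delta_1$-clearance for some $\delta_1 > 0$.

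Next I would repair the terminal condition using the $\xi$-regularity of $\Mgoal$, which is exactly the ingredient Theorem~\ref{thm:DFMTccomp} requires but weak clearance does not supply. If the endpoint $y_1 := x_1(T_1)$ already lies in the open set $\Mgoal$, then $B(y_1,\gamma)\subseteq B^e(y_1,\gamma)\subseteq\Mgoal$ for some $\gamma > 0$ and no repair is needed. Otherwise $y_1 \in \partial\Mgoal$, and $\xi$-regularity furnishes $z \in \Mgoal$ with $B^e(z,\xi)\subseteq\Mgoal$ and $\|y_1 - z\| = \xi$. Appending to $x_1$ the locally optimal segment from $y_1$ to the point $w := y_1 + (\gamma/\xi)(z-y_1)$, one computes $\|w-z\| = \xi - \gamma$, whence $B^e(w,\gamma)\subseteq B^e(z,\xi)\subseteq\Mgoal$ and so the extended path $\hat x$ terminates with $B(\hat x(\hat T),\gamma)\subseteq\Mgoal$. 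Since $\|y_1 - w\| = \gamma$, Lemma~\ref{lem:distancecomp} bounds the appended cost by $2\Amax\gamma^{1/\rank}$, which tends to $0$ with $\gamma$; choosing $\gamma$ small makes it at most $\eps_2 c^*$.

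Finally I would combine the estimates. The surrogate $\hat x$ is feasible and extends into the interior of $\Mgoal$, with $c(\hat x) \le (1+\eps_1+\eps_2)c^*$; applying Theorem~\ref{thm:DFMTccomp} to it with tolerance $\eps_3$ yields $\p{c_n > (1+\eps_3)c(\hat x)} = O(n^{-\eta/D}\log^{-1/D}n)$. Given the target $\eps$, I would pick $\eps_1,\eps_2,\eps_3 > 0$ with $(1+\eps_3)(1+\eps_1+\eps_2) \le 1+\eps$, so that $\{c_n > (1+\eps)c^*\} \subseteq \{c_n > (1+\eps_3)c(\hat x)\}$ and the claimed rate follows. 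The hard part will be the terminal construction: one must drive the path into the interior of $\Mgoal$ while \emph{preserving strong clearance} through the short extension and keeping its cost negligible. In particular, the locally optimal extension to $w$ stays only within $B(y_1, 2\Amax\gamma^{1/\rank})$ of a boundary point $y_1$ that may abut $\Mobs$, so verifying that this tail retains a uniform positive clearance $\delta' > 0$ is the delicate step, and is precisely where the boundary-regularity hypotheses on $\Mgoal$ must be used carefully.
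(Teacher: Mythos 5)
Your proposal is correct and follows essentially the same route as the paper: dispose of $c^*=0$, use the weak-clearance homotopy to obtain a strongly clear path within $(1+\eps_1)$ of optimal, extend it into the interior of $\Mgoal$ via $\xi$-regularity with the appended cost controlled by Lemma~\ref{lem:distancecomp}, and then invoke Theorem~\ref{thm:DFMTccomp} with the tolerances multiplied together. The ``delicate step'' you flag is resolved exactly as you suggest: every point $p$ of the appended tail satisfies $\|p - y_1\| \le d(y_1,p) \le d(y_1,\hat x(\hat T)) \le 2\Amax\gamma^{1/\rank}$, so taking $\gamma$ small enough that this is at most $\delta_1/2$ gives the tail strong $\delta_1/2$-clearance by the triangle inequality against $\Mobs$ --- no further use of the goal's boundary regularity is needed there.
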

\begin{proof}
Note that $c^* = 0$ implies $x_{\text{init}} \in cl(\Mgoal)$, and \DFMT will terminate immediately and optimally. In this case the result is trivial, therefore assume $c^* > 0$. Let $\psi$ denote the homotopy given by the weak $\delta$-clearance definition for $x^*$. Using the fact that $\lim_{\alpha\rightarrow0} c(\psi(\alpha)) = c^*$, we pick an $\alpha_0 > 0$ such that $c(\psi(\alpha_0)) \leq (1 + \eps/4)c^*$. Denote $x:=\psi(\alpha_0)$. We now extend $x$ into the interior of $\Mgoal$ so that we may apply Theorem~\ref{thm:DFMTccomp}. Since $x^*$ is optimal, $x(T) = x^*(T^*) \in \partial \Mgoal$ and the $\xi$-regularity of $\Mgoal$ means there exists  $z \in \Mgoal$ with $B^e(z, \xi) \subseteq \Mgoal$ and $y \in \partial B^e(x(T), \xi)$. Pick
$
\rho = \min\left\{\xi, \frac{\delta_{\alpha_0}}{2}, \frac{1}{2 \Amax}\left(\frac{\eps c^*}{4}\right)^{\rank}, \thmin\right\},
$
and take $z'$ on the straight line segment between $x(T)$ and $z$ with $\|x(T) - z'\| \leq \rho$. Then $B(z', \rho) \subset B^e(z', \rho) \subset B^e(z, \xi) \subset \Mgoal$. The last two terms in the minimum above ensure, from Lemma~\ref{lem:distancecomp}, that $d(x(T), z') \leq (\eps/4)c^*$. Consider the extension $x'$ of $x$ constructed by concatenating $x$ with the shortest sub-Riemannian path between $x(T)$ and $z'$. The cost of $x'$ is bounded as
\[
c(x') \leq c(x') + (\eps/4)c^* \leq (1 + \eps/2)c^*.
\]
The strong $(\delta_{\alpha_0}/2)$-clearance of $x'$ is established by noting for any $p$ along the path between $x(T)$ and $z'$:
\begin{align*}
\inf_{a \in \Mobs} \|p - a\| &\geq \inf_{a \in \Mobs} \|x(T) - a\| - \|p - x(T)\| \\&\geq \delta_{\alpha_0} - \delta_{\alpha_0}/2 = \delta_{\alpha_0}/2.
\end{align*}
Then by Theorem~\ref{thm:DFMTccomp}, with the approximation factor $\eps/4$, we have for $\eps \in (0,1)$ that
\begin{align*}
\p{c_n > (1+\eps)c^*} &\geq \p{c_n > (1+\eps/4)(1+\eps/2)c^*} \\&= O(n^{-\eta/D}\log^{-1/D} n).
\end{align*}
For $\eps \geq 1$ the desired result follows from the monotonicity in $\eps$ of the above probability.
\end{proof}
\begin{remark}
There is an implicit dependence on the problem parameters and approximation factor $\eps$ in the convergence rate bound given above in Theorem~\ref{thm:DFMTAO}; these fixed parameters influence the threshold $n > N$ after which the stated asymptotic bound holds.  
\end{remark}

We conclude this section with the asymptotic optimality result for \DPRM.
\begin{corollary}[\DPRM asymptotic optimality]\label{thm:DPRMAO} Let $(\Sigma, \Mfree, \xinit, \Mgoal)$ be a  trajectory planning problem, satisfying the assumptions $A_\Sigma$ and with $\Mgoal$ $\xi$-regular, such that there exists an optimal path $x^*: [0,T^*]\rightarrow\M$ with weak $\delta$-clearance for some $\delta > 0$. Let $c^*$ denote the arc length of $x^*$, and let $c_n$ denote the cost of the path returned by \DPRM with $n$ vertices using the radius
\[
r_n = 4 \Amax (1 + \eta)^{1/D} \left(\frac{\mu(\Mfree)}{D}\right)^{1/D} \left(\frac{\log n}{n}\right)^{1/D}
\]
for a parameter $\eta \geq 0$. Then for fixed $\eps > 0$
\[
\p{c_n > (1+\eps)c^*} = O(n^{-\eta/D}\log^{-1/D} n).
\]
\end{corollary}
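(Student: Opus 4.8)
The plan is to run \DPRM and \DFMT on the \emph{same} realization of the sampled point set $V = \{\xinit\} \cup \texttt{SampleFree}(n)$ and to show that, for every such realization, the cost $c_n$ returned by \DPRM never exceeds the cost returned by \DFMT. Once this pathwise domination is established, the event $\{c_n > (1+\eps)c^*\}$ for \DPRM is contained in the corresponding event for \DFMT, and the desired bound follows immediately by invoking Theorem~\ref{thm:DFMTAO}; in particular no new concentration estimates are required, since all probabilistic content is inherited.

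The crux is a comparison of the two graphs. Both algorithms share the identical vertex set $V$ and the identical neighborhood primitive \texttt{Near}$(\cdot,\cdot,r_n)$. In \DPRM (Algorithm~\ref{prmalg}) the roadmap contains \emph{every} edge $\{u,v\}$ with $u \in \texttt{Near}(V\setminus\{v\}, v, r_n)$ that passes \texttt{CollisionFree}. In \DFMT (Algorithm~\ref{fmtalg}) the only edges ever inserted are pairs $\{y_{\min}, x\}$ with $y_{\min} \in \texttt{Near}(V\setminus\{x\}, x, r_n)$ that have already been verified \texttt{CollisionFree} in the check of line~\eqref{line:locon}. Hence every edge of the \DFMT tree satisfies exactly the membership criterion $\|\cdot\|_x < r_n/\amin$ and the collision criterion that define the \DPRM roadmap, so the \DFMT edge set is a subset of the \DPRM edge set on the shared vertices.

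Having established this containment, I would conclude as follows. If \DFMT returns a path, it terminates at a vertex $z \in V \cap \xgoal$ along a tree path of cost $c_n^{\mathrm{DFMT}} = \texttt{Cost}(z, T)$. Since the \DPRM roadmap contains all \DFMT tree edges, the \DPRM shortest-path cost from $\xinit$ to this same $z$ is no larger than $c_n^{\mathrm{DFMT}}$; and because \DPRM minimizes over \emph{all} goal vertices $v \in V \cap \xgoal$, its returned cost satisfies $c_n^{\mathrm{DPRM}} \leq \texttt{Cost}(z, G) \leq c_n^{\mathrm{DFMT}}$. Therefore $\{c_n^{\mathrm{DPRM}} > (1+\eps)c^*\} \subseteq \{c_n^{\mathrm{DFMT}} > (1+\eps)c^*\}$, where \DFMT failure is absorbed into the right-hand event as $c_n^{\mathrm{DFMT}} = \infty$, and Theorem~\ref{thm:DFMTAO} yields $\p{c_n^{\mathrm{DPRM}} > (1+\eps)c^*} = O(n^{-\eta/D}\log^{-1/D} n)$.

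The main obstacle is bookkeeping rather than analysis: one must check carefully that the \texttt{Near} conditions invoked by the two algorithms coincide (both test $\|\cdot\|_x < r_n/\amin$ from the perspective of the same endpoint $x$), that the lazy collision checking of \DFMT can only \emph{shrink} the usable edge set relative to \DPRM, and that the goal-termination logic of the two algorithms is compatible so that any \DFMT solution is realizable in the \DPRM roadmap at no greater cost. This is precisely the \FMT-to-\PRM domination argument of \cite{LJ-ES-AC-ea:15}, transplanted to the present sub-Riemannian setting.
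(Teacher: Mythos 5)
Your proposal is correct and is exactly the paper's argument: the \DFMT tree is a subgraph of the \DPRM roadmap on the same sample set, so the \DPRM cost is pathwise dominated by the \DFMT cost and Theorem~\ref{thm:DFMTAO} applies directly. You simply spell out the edge-containment and goal-termination bookkeeping that the paper leaves implicit.
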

\begin{proof}
Given the same sample set $V$, the trajectory tree constructed by \DFMT is a subgraph of the roadmap graph constructed by \DPRM. Hence the cost of the path returned by \DPRM is upper bounded by that of \DFMT. The desired result then follows immediately from Theorem~\ref{thm:DFMTAO}.
\end{proof}

\subsection{Discussion}\label{sec:disc}
We note that there is room for improvement in the choice of constants for the theorems and lemmas above (particularly in the connection radius $r_n$), which for the sake of clarity were not pushed as tight as possible. As a consequence, the asymptotic rate bound $O(n^{-\eta/D}\log^{-1/D} n)$ may be achieved with a connection radius smaller by a constant factor.
The convergence rate bounds for \DFMT and \DPRM, stated in terms of the sample size, are also tunable to specific implementations and planning problems. A greater local connection radius achieves a faster asymptotic convergence rate, but such a choice may ultimately result in greater algorithm execution time as more connections are checked.

The essential property of the system $\Sigma$ underpinning the asymptotic optimality results above is the ball-box Theorem~\ref{thm:ballbox}. Indeed, although the results we present nominally apply only to arc length, the proofs hold for any system with a cost metric $d$ on $\M$ satisfying the ball-box inequality~\eqref{eqn:bbeqn}. For example, if the ball-box inequality holds for arc length on a subset of dimensions of $\M$, \DFMT and \DPRM asymptotic optimality hold with respect to that subspace metric, a fact we make use of in our simulations. The constant $\amin$ ensures that the sub-Riemannian ball $B(x,r)$ is covered by the connection neighborhood of $x$ (i.e. the algorithms search far enough in each direction), while the constant $\Amax$ provides a lower bound on $\mu(B(x,r))$ that ensures the sample set will intersect the neighborhood with high probability (i.e. the algorithms examine enough volume along an optimal path). If checking state membership in balls $B(x,r)$ is computationally efficient,
 then algorithmic performance gains may be achieved by using $\texttt{Near}(V, x, r) = V \cap B(x,r)$, while maintaining the guarantees of Theorems~\ref{thm:DFMTccomp} and \ref{thm:DFMTAO}. Another consequence of this ball-box analysis foundation is the possibility of accommodating \emph{approximate} steering techniques (as in \cite{FJ-ea:05}) for computing local, obstacle-free state connections (one of the bottlenecks for \DPRM and \DFMT).
As long as \DFMT and \DPRM have access to a local planner that can connect two states exactly, but possibly with some approximation factor to the optimal distance, then the global optimality theorems hold up to that same factor.

\begin{figure}[!htbp]
 \centering
 \subfigure{\includegraphics[width=0.48\textwidth]{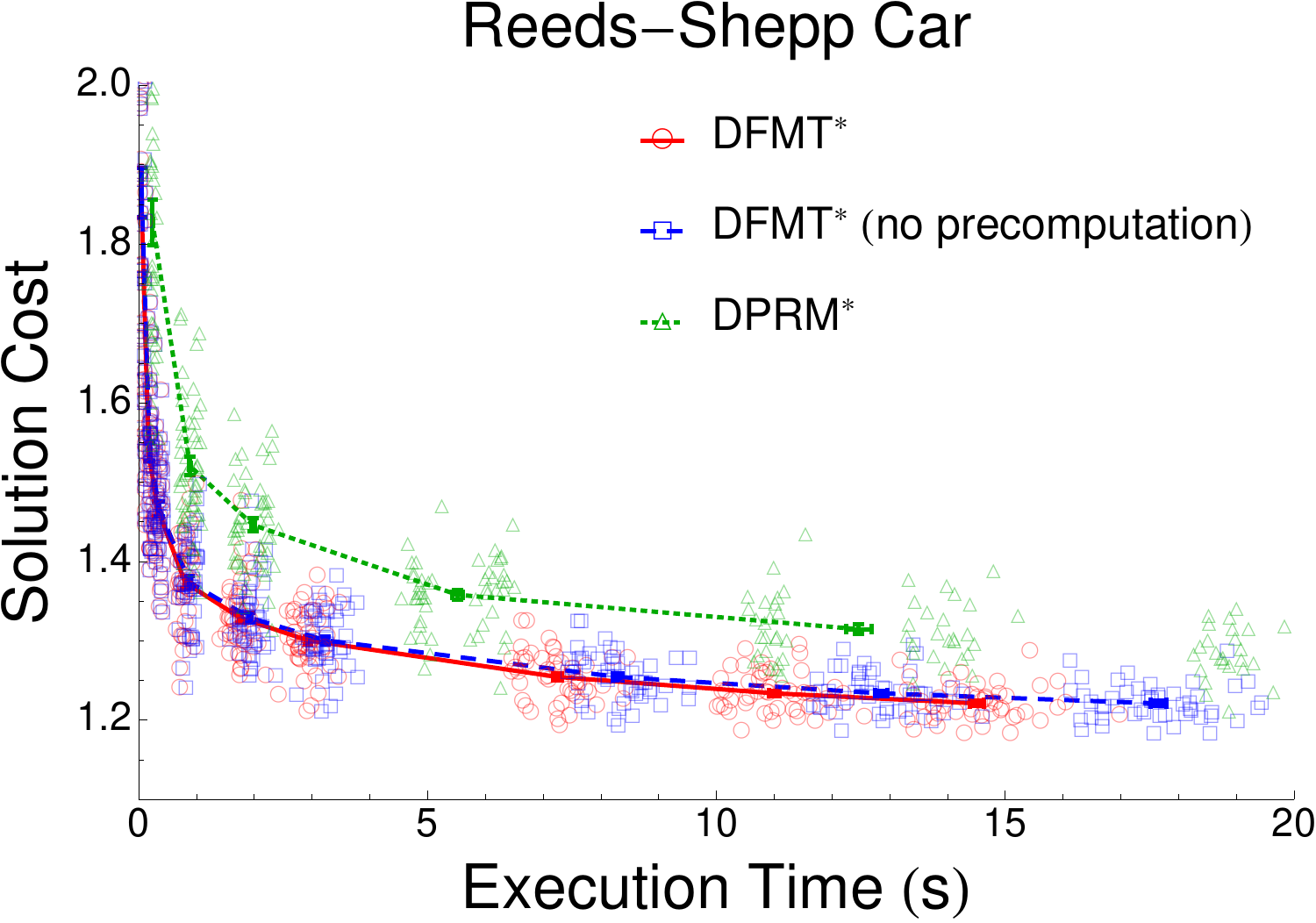}}
 \subfigure{\includegraphics[width=0.48\textwidth]{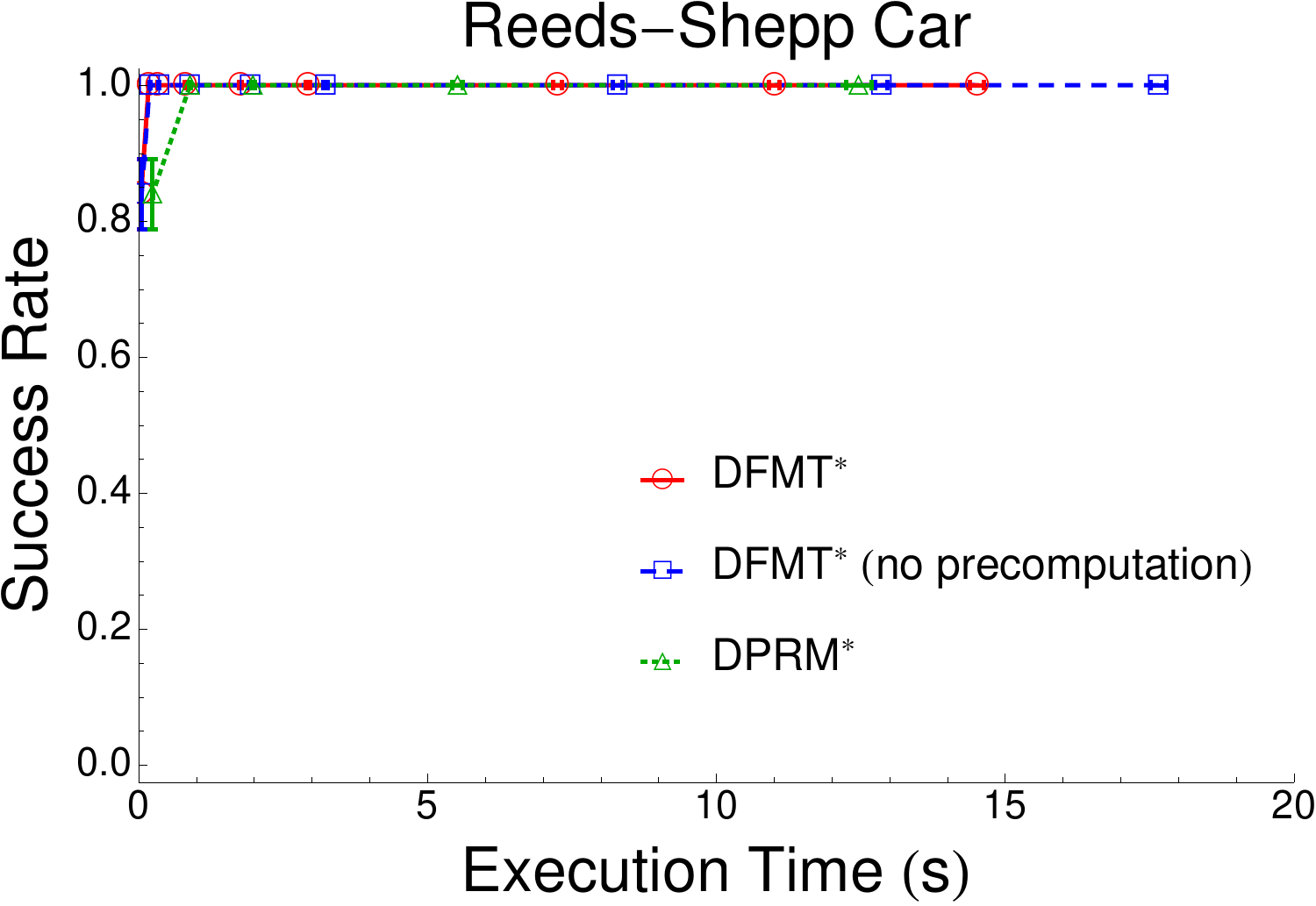}}
 \subfigure{\includegraphics[width=0.48\textwidth]{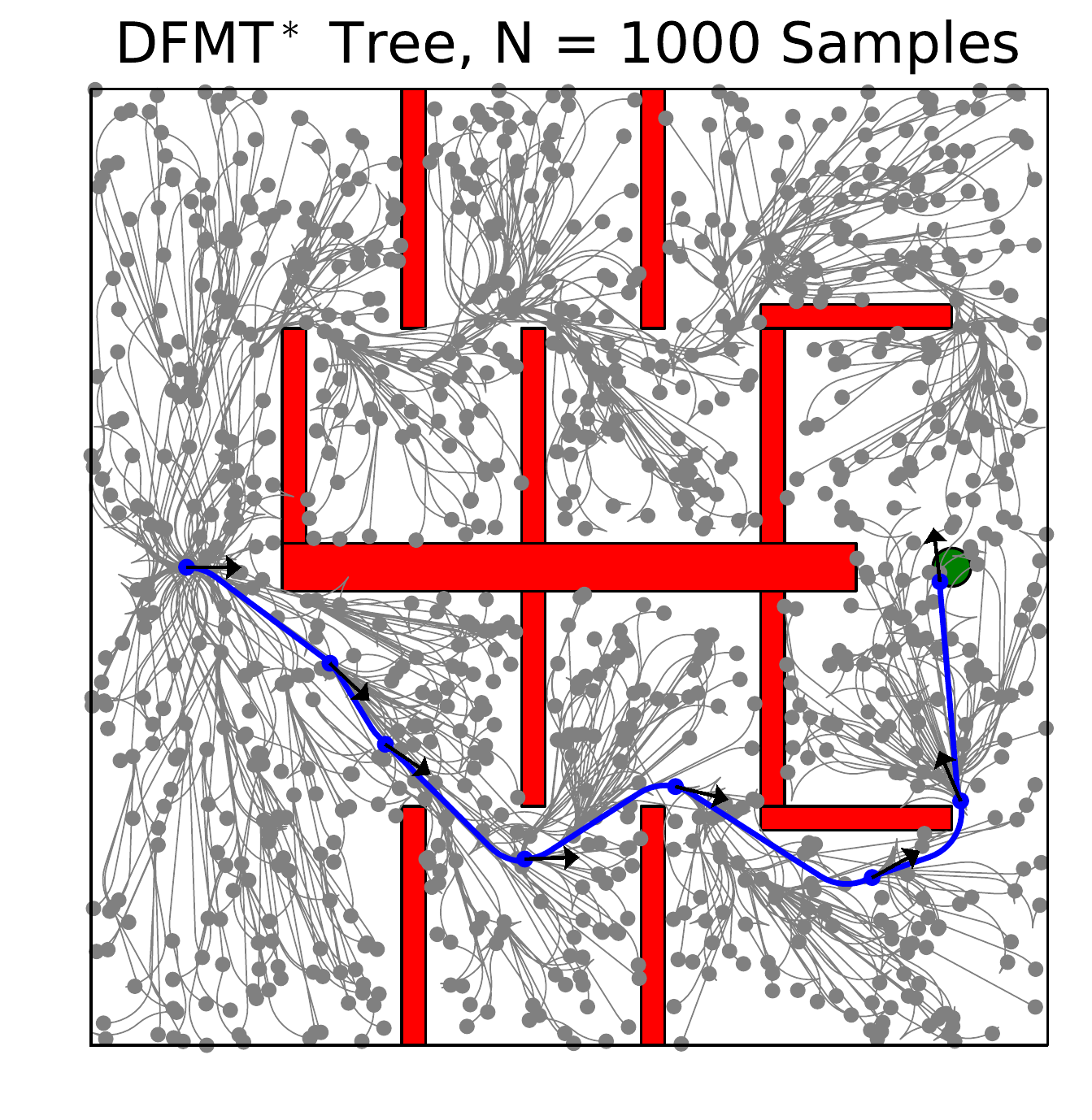}}
 \caption{Top: Simulation results for the Reeds-Shepp car system with a maze obstacle set. The error bars in each axis represent plus and minus one standard error of the mean for a fixed sample size $n$.
 Bottom: Example \DFMT tree for $n=1000$. The feasible path returned is highlighted in blue with car direction at each sample state denoted by an arrow.}
\label{fig:all}
\end{figure}

\section{Implementation and numerical experiments}\label{sec:sims}

The \DFMT and \DPRM algorithms were implemented in Julia and run using a Unix operating system with a 2.0 GHz processor and 8 GB of RAM.
Near neighbor sets were precomputed and cached at algorithm initialization after the sample set was selected. Note that for batch-processing (as opposed to ``anytime'') algorithms such as  \DFMT and \DPRM, one can precompute both near neighbor sets and sub-Riemannian distance ahead of time, as they do not depend on the obstacle configuration --- the price to pay is a moderate increase in memory requirements. 

We tested \DFMT and \DPRM on the Reeds-Shepp car system \cite{Reeds.Shepp:90}, a well-known example of a regular driftless control-affine system. Within the configuration manifold $\M=\R^2\times S^1$, trajectories $x(t) = (x^1(t), x^2(t), \theta(t))$ are subject to equation~\eqref{eqn:dcasys} with $g_1(x) = (\cos\theta, \sin\theta, 0)$ and $g_2(x) = (0, 0, 1)$. With the car constrained to move at unit speed with turning radius $R$, the control set is $U = \{-1, 1\}\times[-1/R,1/R]$. The privileged coordinate system at each point $x$ is defined by the vectors $g_1(x)$, $g_2(x)$, and $[g_1(x),g_2(x)] = (\sin\theta,-\cos\theta,0)$ with associated weight vector $(1,1,2)$. We aim to optimize planar arc length in the first two dimensions of $\M$, in which case the constants for the ball-box Theorem~\ref{thm:ballbox} are computed as $\amin = \sqrt{2R}$ and $\Amax = 2\sqrt{2R}$ for $\sigmamin = R$.

An analytic solution to the Reeds-Shepp optimal steering problem in the absence of obstacles is known \cite{Reeds.Shepp:90}, and consists of checking forty-six candidate curves for the one of minimum length. We precomputed and cached the optimal curve type between a state at the origin and any general state (using spatial symmetry this gives the optimal steering curve type between any two states in Reeds-Shepp space); the time for this operation was not included in our accounting of algorithm execution time. We used this data to implement the variants of \DFMT and \DPRM discussed in Section~\ref{sec:disc} where the exact sub-Riemannian ball is used for the $\texttt{Near}$ function instead of the privileged coordinate box. Collisions with obstacles were detected by approximating trajectories by piecewise linear interpolations, for which polytope intersection is simple to compute.

The simulation results are summarized in Figure~\ref{fig:all}. A maze was used for $\Mobs$, and our \DFMT and \DPRM implementations were run 50 times each on sample sizes up to $n=4000$ in the case of \DFMT and $n=1000$ in the case of \DPRM. We plot results for \DFMT run both with and without the precomputed neighbor sets and distance cache; the solutions returned are identical and differ only in their timing data. For the Reeds-Shepp system, local connection computation and collision checking takes the majority of the computation time, and the speed increase from the cache is modest. For a greater speedup at the cost of more memory usage, all local \emph{collision-free} connections could be cached ahead of time as well, leaving only collision checking with $\Mobs$ (the only detail specific to the problem instance) to be computed online.

\section{Conclusions}\label{sec:conc}

In this paper we presented a thorough theoretical framework to assess optimality guarantees of sampling-based algorithms for planning with driftless control-affine systems. We exploited this framework to design and analyze two novel sampling-based algorithms that are guaranteed to converge, as the number of samples increases, to an optimal solution. Our theoretical framework, which relies on the key notion of probabilistic exhaustivity, separates the probabilistic analysis of random sampling from the discrete analysis of combinatorial optimization algorithms, the two main elements in determining the performance of batch-processing motion planners (such as \DFMT and \DPRM). We hope that this analysis framework can prove useful to analyze other types of algorithms for this class of problems. We note that our analysis approach yields convergence rate guarantees, which shed additional light on the behavior of sampling-based algorithms for differentially-constrained motion planning problems.

This paper leaves numerous important extensions open for further research. One extension that appears quite tractable is generalizing the cost function from arc length to functions that may depend on state-space position and/or control effort. For example, the techniques established in this paper may be used, with a few modifications, to prove \DFMT and \DPRM variants for positionally weighted arc length costs, in which certain spatial regions incur higher or lower costs for traversal. A second extension of great interest  is the design and analysis of algorithms that address systems with \emph{drift} (of critical importance, for example, for spacecraft motion planning). The specific analysis techniques used in this paper rely heavily on the fact that driftless control-affine systems can be viewed as metric spaces, but asymmetric adaptations may be imagined within the same proof framework. Third, we plan to study bidirectional versions of \DFMT and \DPRM. Finally, determining how algorithm tuning parameters affect asymptotic convergence rates and, ultimately, run time merits more extensive theoretical and experimental analysis, especially for DCA systems beyond the basic proof-of-concept Reeds-Shepp example presented in this paper.

\renewcommand*{\bibfont}{\footnotesize}
\bibliographystyle{IEEEtran}
\bibliography{references,../../../bib/alias,../../../bib/main}

\section*{Appendix}
\begin{replemma}{lem:boxvol}[Box volume] Fix $x \in \M$. The volume of $\boxw(x, r)$ is given by $\mu(\boxw(x, r)) = r^D$ where $D = \sum_{i=1}^n w_i$.
\end{replemma}
\begin{proof}
From the formula \eqref{privCOC} we see that the Jacobian matrix $J_Z$ for the change of coordinates between the local frame $y_1,...,y_n$ and privileged coordinates $z_1,\dots, z_n$ is lower-triangular with all diagonal elements equal to $1$; thus the Jacobian determinant $|J_Z|$ is identically $1$. Since $Y_1(x),\dots,Y_n(x)$ are orthonormal, the change of coordinates from the Euclidean standard basis to the coordinates $y_1,\dots,y_n$, with Jacobian $J_Y$, satisfies $|J_Y|=1$. Then $\mu(\boxw(x, r)) = \int_{\|z\|_x \leq r} |J_Z^{-1}||J_Y^{-1}| dz = \int_{\|z\|_x \leq r} 1\, dz = r^{\sum_{i=1}^n w_i}$.
\end{proof}

\begin{replemma}{lem:distancecomp}[Distance comparison] Fix a point $x_0 \in \M$ and a system of privileged coordinates $z_1,\dots,z_n$ at $x_0$. Then there exists a positive constant $\theta(x_0) > 0$ such that for all $x$ with $\|x_0 - x\| \leq \theta(x_0)$,
\[
\|x_0 - x\| \leq d(x_0, x) \leq 2\Amax\|x_0 - x\|^{1/\rank}.
\]
\end{replemma}
\begin{proof}
The left-hand inequality is a consequence of the definition of $d$. For the right-hand inequality, we note for each coordinate that $|z_i(x)^{1/w_i}| = |y_i(x)|^{1/w_i} + O(\|y(x)\|^{2/w_i}) \leq \|x_0-x\|^{1/w_i} + O(\|x_0-x\|^{2/w_i})$, so then
$
d(x_0, x) \leq \Amax \|z(x)\|_{x_0} \leq 2\Amax\|x_0 - x\|^{1/\rank}
$
for small $\|x_0 - x\|$.
\end{proof}

\begin{replemma}{lem:ptubound} Let $x$ be a dynamically feasible trajectory and consider a partition of the time interval $0 = \tau_1 < \tau_2 < \dots < \tau_M = T$. Suppose that $\{y_m\} \subset \M$ satisfy a) $y_m \in B(x(\tau_m), \rho)$ for all $m \in \{1,\dots,M\}$, and b) more than a $(1-\alpha)$ fraction of the $y_m$ satisfy $y_m \in B(x(\tau_m), \beta \rho)$ for a parameter $\beta \in (0,1)$. Then the cost $c(y^*)$ of the trajectory $y^*$ sequentially connecting the nodes $y_1,\dots,y_M$ is upper bounded as
\[
c(y^*) \leq c(x) + 2M\rho(\beta + \alpha - \alpha\beta).
\]
\end{replemma}
\begin{proof}
Using the triangle inequality for the metric $d$ we compare:
{\small
\begin{align*}
c(y^*) &= \sum_{m=1}^{M-1} d(y_m, y_{m+1}) \leq \sum_{m=1}^{M-1} \bigl[d(y_m, x(\tau_m))\\
&\qquad  + d(x(\tau_m), x(\tau_{m+1}))+ d(x(\tau_{m+1}), y_{m+1})\bigr] \\
                    &= \sum_{m=1}^{M-1} d(x(\tau_m), x(\tau_{m+1})) + 2 \sum_{m=1}^{M} d(x(\tau_m), y_m) \\
                    &\qquad- d(x_{1}, y_{1}) - d(x_M, y_M)\\
                    &\leq c(x) \!+\! 2 \left(M \beta \rho + \alpha M (1\!-\!\beta)\rho\right)\leq c(x) \!+\! 2M\rho(\beta + \alpha - \alpha\beta).
\end{align*}
}
\end{proof}

\begin{replemma}{lem:samplesmall} Fix $n \in \N$, $\alpha \in (0,1)$, and let $S_1, \dots, S_M$ be disjoint subsets of $\Mfree$ with
\[
\mu(S_m) = \mu(S_1) \geq \left(\frac{2+\log(1/\alpha)}{n}\right)e^2 \mu(\Mfree),
\]
for each $m$. Let $V = \SF(n)$ and define
\[
K_n := \#\{m \in \{1,\dots,M\} : S_m \cap V = \emptyset\}.
\]
Then $\p{K_n \geq \alpha M} \leq \frac{e^{-\alpha  M}}{1-e^{-n}}$.
\end{replemma}
\begin{proof}
The proof relies on a Poissonization argument.  Let $\tilde{n}$ be a random variable drawn from a Poisson distribution with parameter $n/e^2$ (denoted as Poisson$(n/e^2)$). Consider the set of nodes $\widetilde{V} := \texttt{SampleFree}(\tilde{n})$. Let $\widetilde{K}_n$ be the Poissonized analogue of $K_n$, namely
\[
\widetilde{K}_n := \#\Bigl \{m \in \{1, \dots, M\}: S_m \cap \widetilde{V} = \emptyset \Bigr \}.
\]
From the definition of $\widetilde{V}$, we can see that $\p{K_n \ge \alpha  M)} = \probcond{\widetilde{K}_n \ge \alpha  M}{\tilde{n} = n}$.  Thus, we have
{\small
\begin{equation}\label{pois_ineq}
\begin{split}
  \p{\widetilde{K}_n \ge \alpha  M}  &= \sum_{j = 0}^{\infty} \probcond{\widetilde{K}_n \ge \alpha  M}{\tilde{n} = j} \, \p{\tilde{n} = j} \\
&  \ge \sum_{j = 0}^{n} \probcond{\widetilde{K}_n \ge \alpha  M}{ \tilde{n} = j}\,\p{\tilde{n} = j} \\
&  \ge \sum_{j = 0}^{n} \probcond{\widetilde{K}_n \ge \alpha  M}{ \tilde{n} = n} \, \p{\tilde{n} = j} \\
&  = \probcond{\widetilde{K}_n \ge \alpha  M}{ \tilde{n} = n} \, \p{\tilde{n} \le n} \\
& = \p{K_n \ge \alpha  M} \, \p{\tilde{n} \le n} \\
& \ge (1-e^{-n}) \p{K_n \ge \alpha  M},
\end{split}
\end{equation}
}
The third line follows from the fact that $\mathbb{P}(\widetilde{K}_n \ge \alpha  M | \tilde{n} = j)$ is nonincreasing in $j$, and the last line follows from a tail approximation of the Poisson distribution \citep[p. 17]{Penrose:03} and the fact that $\mathbb{E}[\tilde{n}] = n/e^2$.

The locations of the nodes in $\widetilde{V}$ are distributed as a spatial Poisson process with intensity $n / (e^2 \mu(\Mfree))$. This means that for a Lebesgue-measurable region $R \subseteq \Mfree$, the number of nodes in $R$ is distributed as a Poisson random variable with distribution Poisson$\Bigl ((n/e^2) \mu(R) / \mu(\Mfree)\Bigr)$, independent of the number of nodes in any region disjoint with $R$ \citep[Lemma 11]{Karaman.Frazzoli:IJRR2011}. Since by assumption the $S_m$ are all disjoint, we get that for $m \in \{1,\dots,M\}$,
\[
\p{S_m \cap \widetilde{V} = \emptyset} = e^{-(n/e^2) \mu(S_1) / \mu(\Mfree)}.
\]
Therefore, $\widetilde{K}_n$ is distributed according to a binomial distribution, in particular according to a distribution Binomial($M$, $e^{-(n/e^2) \mu(S_1) / \mu(\Mfree)}$). Noting that $e^{-(n/e^2) \mu(S_1) / \mu(\Mfree)} \leq \alpha/e^2$ by assumption, we have that $\mathbb{E}[\widetilde{K}_n] \leq \alpha  M/e^2$. So from a tail approximation to the Binomial distribution \citep[p. 16]{Penrose:03}, $\mathbb{P}(\widetilde{K}_n \ge \alpha M) \leq e^{-\alpha  M}$ and thus plugging into inequality \eqref{pois_ineq}
\[
\mathbb{P}(K_n \ge \alpha M) \leq \frac{e^{-\alpha  M}}{1-e^{-n}}.
\]
\end{proof}

\begin{replemma}{lem:samplebig} Fix $n \in \N$ and let $T_1, \dots, T_M$ be subsets of $\Mfree$, possibly overlapping, with
\[
\mu(T_m) = \mu(T_1) \geq \kappa\left(\log n/n \right)\mu(\Mfree)
\]
for each $m$ and some constant $\kappa > 0$. Let $V = \SF(n)$ and denote by $E_m$ the event that $T_m \cap V = \emptyset$ for each $m$. Then
\[
\p{\bigvee_{m=1}^M E_m} \leq Mn^{-\kappa}.
\]
\end{replemma}
\begin{proof}
We union bound the probability of any $E_m$ occurring:
\begin{align*}
\p{\bigvee_{m=1}^M E_m} &\leq \sum_{m=1}^M \p{T_m \cap V = \emptyset} \\
                        &= \sum_{m=1}^M \left(1-\frac{\mu(T_m)}{\mu(\Mfree)}\right)^n\\
                        &\leq M e^{-\left(\frac{\mu(T_m)}{\mu(\Mfree)}\right)n}\leq Mn^{-\kappa}.
\end{align*}
\end{proof}

\begin{reptheorem}{thm:DFMTccomp}[\DFMT cost comparison] Let $(\Sigma, \Mfree, \xinit, \Mgoal)$ be a  trajectory planning problem satisfying the assumptions $A_\Sigma$ and suppose $x: [0,T] \rightarrow \M$ is a feasible path with strong $\delta$-clearance, $\delta > 0$. Assume further that $x$ extends into the interior of $\Mgoal$, i.e. there exists $\gamma>0$ such that $B(x(T), \gamma) \subset \Mgoal$.
Let $c_n$ denote the cost of the path returned by \DFMT with $n$ vertices using a radius
\[
r_n = 4 \Amax (1 + \eta)^{1/D} \left(\frac{\mu(\Mfree)}{D}\right)^{1/D} \left(\frac{\log n}{n}\right)^{1/D}
\]
for a parameter $\eta \geq 0$. Then for fixed $\eps > 0$
\[
\p{c_n > (1+\eps)c(x)} = O(n^{-\eta/D}\log^{-1/D} n).
\]
\end{reptheorem}
\begin{proof}
Consider $n$ so that $r_n \leq \min\{\gamma, \delta\amin/2\Amax\}$, and apply Theorem~\ref{thm:pathtracing} to produce, with probability at least $1 - O(n^{-\eta/D}\log^{-1/D} n)$, a sequence of waypoints $\{y_m\}_{m=1}^{M} \subset V$ which $(\eps, r_n)$-trace $x$. We claim that in the event that such $\{y_m\}$ exist, the \DFMT algorithm will return a path with cost upper bounded as $c_n \leq c(y^*) \leq (1+\eps)c(x)$. It is clear that the desired result follows from this claim.

Assume the existence of $(\eps, r_n)$-tracing $\{y_m\}$. Note that our upper bound on $r_n$ implies that $\boxw(y_m, r_n/\amin)$ intersects no obstacles. This follows from the inclusion $\boxw(y_m, r_n/\amin) \subset B(y_m, \Amax r_n/\amin) \subset B^e(y_m, \Amax r_n/\amin)$, and the Euclidean distance bound
\begin{align*}
\inf_{a \in \Mobs} \|y_m - a\| &\geq \inf_{a \in \Mobs} \|x_m - a\| - \|y_m - x_m\|\\
		&\geq 2\left(\frac{\Amax}{\amin}\right)r_n - r_n \geq \left(\frac{\Amax}{\amin}\right)r_n.
\end{align*}
Consider running \DFMT to completion and for each $y_m$, let $c(y_m)$ denote the cost-to-come of $y_m$ in the generated tree. If $y_m$ is not contained in any edge of the tree, we set $c(y_m)=\infty$. We show by induction that
\begin{equation}\label{eqn:ctocclaim}
\min(c(y_m), c_n) \leq \sum_{k=1}^{m-1} d(y_k, y_{k+1}),
\end{equation}
for all $m \in \{2,\dots,M\}$.

The base case $m=2$ is trivial, since the first step in the \DFMT algorithm is to make every collision-free connection between $\xinit = y_1$ and the nodes contained in $\boxw(x, r_n/\amin)$, which will include $y_2$ and, thus, $c(y_2) = d(y_1, y_2)$. Now suppose \eqref{eqn:ctocclaim} holds for $m-1$; that means that one of the following four statements must hold.
\begin{itemize}
\item[1.] $c_n \le \sum_{k = 1}^{m-2}d(y_k, y_{k+1})$,
\item[2.] $c(y_{m-1}) \le \sum_{k = 1}^{m-2}d(y_k, y_{k+1})$ and \DFMT ends before considering $y_m$,
\item[3.] $c(y_{m-1}) \le \sum_{k = 1}^{m-2}d(y_k, y_{k+1})$ and $y_{m-1} \in H$ when $y_m$ is first considered,
\item[4.] $c(y_{m-1}) \le \sum_{k = 1}^{m-2}d(y_k, y_{k+1})$ and $y_{m-1} \notin H$ when $y_m$ is first considered.
\end{itemize}

Case 1: $c_n \le \sum_{k = 1}^{m-2}d(y_k, y_{k+1}) \le \sum_{k = 1}^{m-1}d(y_k, y_{k+1})$. \vskip0.5em

Case 2: $c(y_{m-1}) < \infty$ implies that $y_{m-1}$ enters $H$ at some point during \DFMT. The case that $y_m$ goes unconsidered means that the algorithm terminates before $x_{m-1}$ is ever minimum-cost element of $H$.  Since the end-node of the solution returned must have been the minimum-cost element of $H$, $c_n \le c(y_{m-1}) \leq \sum_{k = 1}^{m-2}d(y_k, y_{k+1}) \le \sum_{k = 1}^{m-1}d(y_k, y_{k+1})$. \vskip0.5em

Case 3: Since $\boxw(y_m, r_n/\amin)$ intersects no obstacles, $y_m$ must be connected to some parent when it is first considered. $y_{m-1}$ is a candidate, so $c(y_m) \le c(y_{m-1}) + d(y_{m-1}, y_{m}) \le \sum_{k = 1}^{m-1}d(y_k, y_{k+1})$. \vskip0.5em

Case 4: When $y_m$ is first considered, it is because there exists $z \in \boxw(y_m, r_n/\amin)$ such that $z$ is the minimum-cost element of $H$.  Again since $\boxw(y_m, r_n/\amin)$ intersects no obstacles and contains at least one node in $H$, $y_m$ must be connected to some parent when it is first considered. Since $c(y_{m-1}) < \infty$, there is a well-defined path $\mathcal{P} = \{v_1 = \xinit, v_2, \dots, v_q = y_{m-1}\}$ giving its cost-to-come in the \DFMT tree at termination.  Let $w = v_j$, where $j = \max_{i \in \{1,\dots,q\}} \{i : v_i \in H \text{ when } y_m \text{ is first considered}\}$. Then there are two subcases, either $w \in B(y_m, r_n)$ or $w \notin B(y_m, r_n)$.  If $w \in B(y_m, r_n)$, then,
\begin{align*}
c(y_m) & \le c(w) + d(w, y_m)\\&
 \le c(w) + d(w, y_{m-1}) + d(y_{m-1}, y_{m}) \\
& \leq c(y_{m-1}) + d(y_{m-1}, y_{m}) \le \sum_{k = 1}^{m-1}d(y_k, y_{k+1}).
\end{align*}

If $w \notin B(y_m, r_n)$, then $r_n \leq d(w, y_m) \leq d(w, y_{m-1}) + d(y_{m-1}, y_{m})$, so we have
\begin{align*}
c(y_m) & \le c(z) + d(z, y_m)  \le c(w) + r_n \\&\le c(w) + d(w, y_{m-1}) + d(y_{m-1}, y_{m})\\
		& \leq c(y_{m-1}) + d(y_{m-1}, y_{m}) \le \sum_{k = 1}^{m-1}d(y_k, y_{k+1}).
\end{align*}

Thus the inductive step holds in all cases and therefore \eqref{eqn:ctocclaim} holds for all $m$.  In particular taking $m = M$, and noting that $r_n \leq \gamma$ implies that $y_{M} \in \Mgoal$, this means that $c_n \leq c(y_M) \leq \sum_{k=1}^{M-1} d(y_k, y_{k+1}) = c(y^*)$ as desired.
\end{proof}

\end{document}